\newtheorem{proposition}{Proposition}
\newtheorem{lemma}{Lemma}
\newtheorem{theorem}{Theorem}
\newtheorem{remark}{Remark}
\newtheorem{definition}{Definition}
\renewcommand{\algorithmiccomment}[1]{\bgroup\hfill\tiny//~#1\egroup}
\date{}
\begin{document}

\title{\bf Sequential Adversarial Anomaly Detection\\for One-Class Event Data}
\author[1]{Shixiang Zhu}
\affil[1]{Heinz College of Information Systems and Public Policy, Carnegie Mellon University}
\author[2]{Henry Shaowu Yuchi}
\author[2]{Minghe Zhang}
\author[2]{Yao Xie}
\affil[2]{H. Milton Stewart School of Industrial and Systems Engineering, Georgia Institute of Technology.}
\maketitle

\begin{abstract}
We consider the sequential anomaly detection problem in the one-class setting when only the anomalous sequences are available and propose an adversarial sequential detector by solving a minimax problem to find an optimal detector against the worst-case sequences from a generator. The generator captures the dependence in sequential events using the marked point process model. The detector sequentially evaluates the likelihood of a test sequence and compares it with a time-varying threshold, also learned from data through the minimax problem. We demonstrate our proposed method's good performance using numerical experiments on simulations and proprietary large-scale credit card fraud datasets. The proposed method can generally apply to detecting anomalous sequences. 
\end{abstract}

\noindent{\it Keywords:} 
sequential anomaly detection, adversarial learning, imitation learning, credit card fraud detection

\section{Introduction}

Spatio-temporal event data are ubiquitous nowadays, ranging from electronic transaction records, earthquake activities recorded by seismic sensors and police reports. Such data consist of sequences of discrete events that indicate when and where each event occurred and other additional descriptions such as its category or volume. We are particularly interested in financial transaction fraud, which is often caused by stolen credit or debit card numbers from an unsecured website or due to identity theft. Collected financial transaction fraud typically consists of a series of anomalous events: unauthorized uses of a credit or debit card or similar payment tools (\emph{Automated Clearing House(ACH), Electronic Funds Transfer(EFT)}, recurring charge, et cetera.) to obtain money or property \cite{FBI2021}. As illustrated in Figure~\ref{fig:macy-res1}, such events sequence corresponds to anomalous transaction records,  typically including the time, location, amount, and type of the transactions.

Early detection of financial fraud plays a vital role in preventing further economic loss for involved parties. In today's digital world, credit card fraud and ID theft continue to rise in recent years. Losses to fraud incurred by payment card issuers worldwide reached \$19.21 billion in 2019. Card issuers accounted for 68.97\% of gross fraud losses \cite{Nilson2019} since the liability usually comes down to the merchant or the card issuer, according to the ``zero-liability policies''\footnote{\url{https://usa.visa.com/pay-with-visa/visa-chip-technology-consumers/zero-liability-policy.html}} -- merchants and banks could face a significant risk of economic losses. Credit card fraud also causes much loss and trouble to the customers with stolen identities have been stolen: victims need to report unauthorized charges to the card issuer, canceling the current card, waiting for a new one in the mail, and subbing the new number into all auto-pay accounts linked to the old card. The entire process can take days or even weeks.


\begin{figure}[!t]
\centering
\begin{subfigure}[h]{0.47\linewidth}
\includegraphics[width=\linewidth]{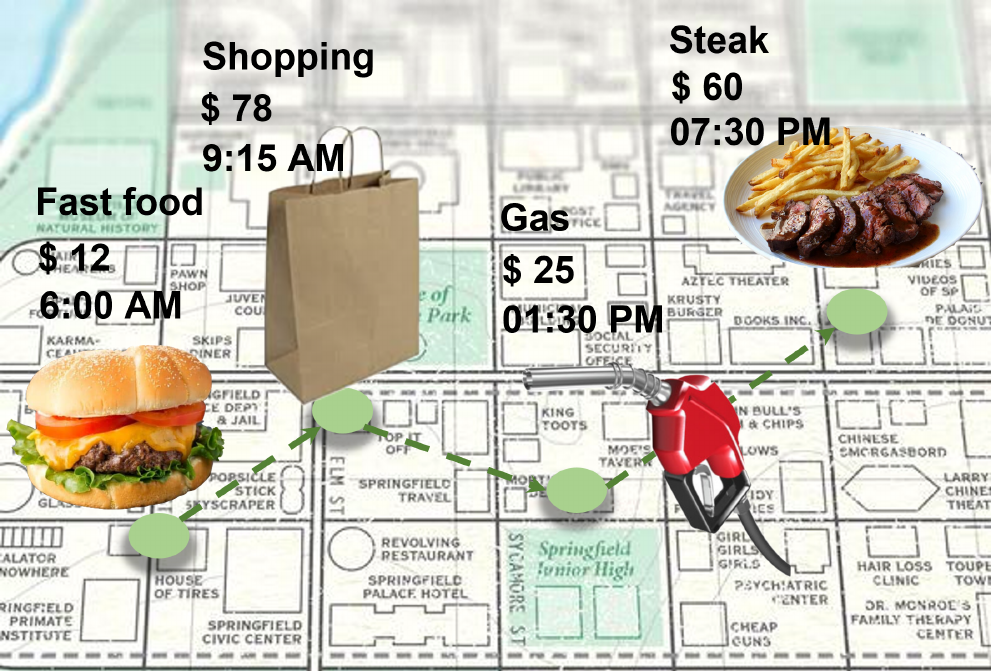}
\end{subfigure}
\hfill
\begin{subfigure}[h]{0.47\linewidth}
\includegraphics[width=\linewidth]{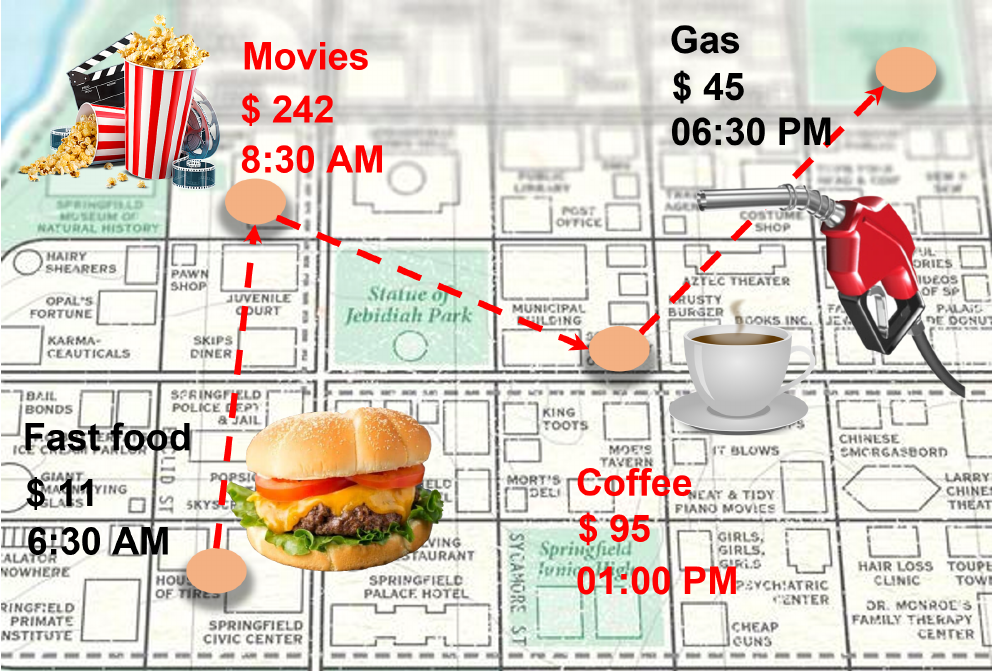}
\end{subfigure}
\vspace{.1in}
\caption{Examples of a sequence of ordinary events (left) and a sequence of anomalous events (right) that are dependent: one leads to another. 
The events on the left show an ordinary pattern of transactions for a consumer. For the events on the right, it is unusual for a consumer to execute a transaction at a movie theater in the early morning and then a high-volume transaction at a coffee shop. It indicates the events are abnormal.}
\label{fig:macy-res1}
\vspace{-.15in}
\end{figure}

For applications such as financial fraud detection, we usually only have access to the anomalous event sequences. This can be due to protecting consumer privacy, so only fraudulent transaction data are collected for the study. The resulting one-class problem makes the task of anomaly detection even more challenging. Still, there are distinctive patterns of anomalies that enable us to develop powerful detection algorithms. For instance, Figure~\ref{fig:card-demo-1} shows an example of a sequence of fraudulent transactions that we extracted from real data. A fraudster used a stolen card twelve times in just six days and made electronic transactions at stores that are physically far away from each other, ranging from California to New England. The types of transactions are also different from the regular spending pattern. Figure~\ref{fig:card-demo-2} illustrates the distribution of a collection of fraudulent transactions for location (store ID), season, and the number of transactions. We can observe a significant portion of transactions at the department store in San Francisco and New York.

\begin{figure}[!t]
\centering
\begin{subfigure}[h]{0.32\linewidth}
\includegraphics[width=\linewidth]{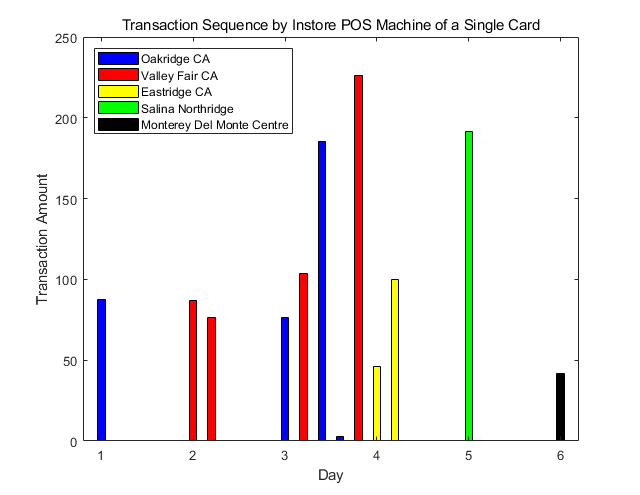}
\caption{transactions by one card}
\label{fig:card-demo-1}
\end{subfigure}
\begin{subfigure}[h]{0.33\linewidth}
\includegraphics[width=\linewidth]{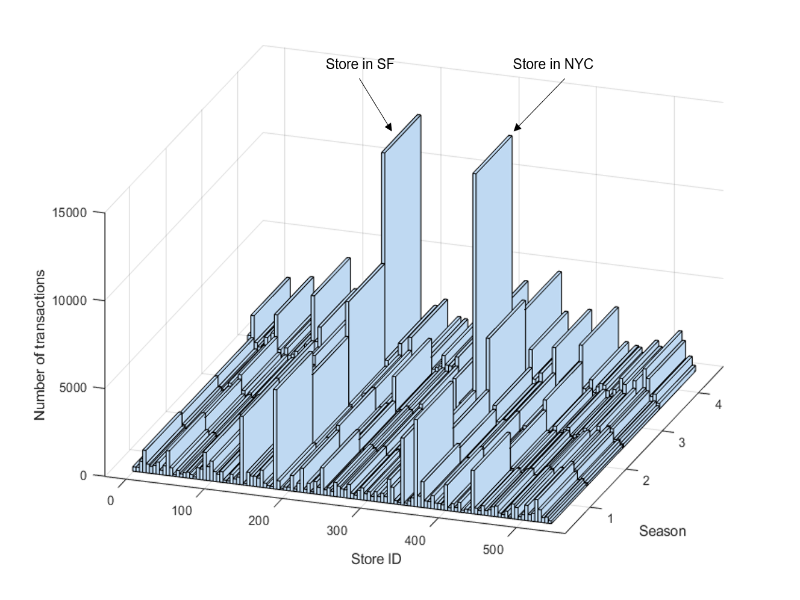}
\caption{locations of the transactions}
\label{fig:card-demo-2}
\end{subfigure}
\begin{subfigure}[h]{0.33\linewidth}
\includegraphics[width=\linewidth]{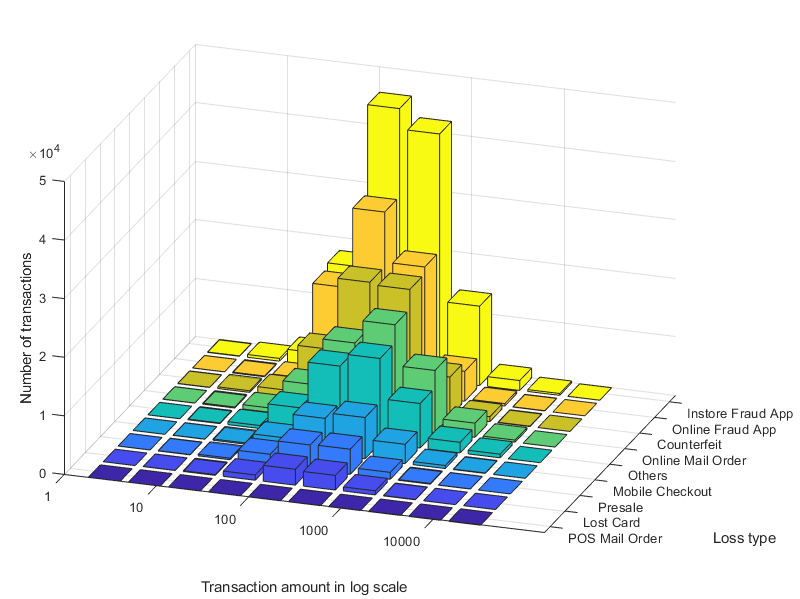}
\caption{types of the transactions}
\label{fig:card-demo-3}
\end{subfigure}
\vspace{.1in}
\caption{Example of sequential fraud credit card transactions data set provided by a major department store in the US: (a) shows a sequence of transactions made by one stolen credit card; each bar represents a fraudulent transaction, and the bar's height indicates the transaction amount in dollars and the color of the bar indicates the location of the transaction; (b) shows an overview of how these fraudulent transactions were distributed over stores and seasons; (c) shows an overview of how these fraudulent transactions were distributed over the amount of purchase for different loss types.}
\label{fig:card-demo}
\vspace{-.15in}
\end{figure}

Although there has been much research effort in machine learning and statistics for anomaly detection using sequential data \cite{Chandola2010, Xu2010, Chung2015, Doshi2020}, we cannot use existing methods here directly for the following reasons. First, many existing works consider detecting anomalous sequences ``as a whole'' rather than detecting in an online fashion. Second, the one-class data situation requires an unsupervised approach for anomaly detection; however, most sequential anomaly detection algorithms are based on supervised learning. 

This paper presents an adversarial anomaly detection algorithm for one-class sequential detection, where only anomalous data are available. The adversarial sequential detector is solved from a minimax problem to find an optimal detector against the worst-case sequences from a generator that captures the dependence in sequential events using the marked point process model. The detector sequentially evaluates the likelihood of a test sequence and compares it with a time-varying threshold, which is also learned from data through the minimax problem. We demonstrate the proposed method's good performance by comparing state-of-the-art methods on synthetic and proprietary large-scale credit card fraud data provided by a major department store in the US. 

On a high-level, our minimax formulation is inspired by imitation learning \cite{Hussein2017}, which minimizes the maximum mean discrepancy \cite{Gretton2012} (MMD). In particular, the generator is built upon the Long Short-Term Memory (LSTM) \cite{Hochreiter1997} which specifies the conditional distribution of the next event. We parameterize the detector by comparing the likelihood function of marked Hawkes processes with a deep Fourier kernel \cite{Zhu2021, zhu2021neural} with a threshold. The resulted likelihood function is computationally efficient to implement in the online fashion and can capture complex dependence between events in anomalous sequences. A notable feature of our framework is a {\it time-varying} threshold learned from data by solving the minimax problem, which achieves tight control of the false-alarms and hard to obtain precisely in theory. This is a drastic departure from prior approaches in sequential anomaly detection. 

The rest of the paper is organized as follows. We first discuss the related work in sequential anomaly detection and revisit some basic definitions in imitation learning. Section~\ref{sec:anomaly-detection} sets up the problem and introduces our sequential anomaly detection framework. Section~\ref{sec:model} proposes a new marked point process model equipped with a deep Fourier kernel to model-dependent sequential data. Section~\ref{sec:generator} presents the adversarial sequence generator and learning algorithms. Finally, we present our numerical results on both real and synthetic data in Section~\ref{sec:experiments}. Proofs to all propositions can be found in the appendix.

\subsection{Related work}

Several research lines are related to this work, including imitation learning, the long-term-short-term (LSTM) architecture for modeling sequence data, one-class anomaly detection, and fraud detection, which we review below.

Imitation learning \cite{Hussein2017} aims to mimic the expert's behavior in a given task.
An agent (a learning machine) is trained to perform a task from demonstrations by learning a mapping between observations and actions. Landmark works by \cite{Abbeel2004, Syed2008} attack this problem via \emph{inverse reinforcement learning} (IRL). 
In their work, the learning process is achieved by devising a game-playing procedure involving two opponents in a zero-sum game. This alteration not only allows them to achieve the same goal of doing nearly as well as the expert as in \cite{Abbeel2004} but achieves better performances in various settings. However, this strategy cannot be directly applied to event data modeling without adaptation. 
A recent work \cite{Li2018} filled this gap by introducing a reward function with a non-parametric form, which measures the discrepancy between the training and generated sequences. Their proposed approach models the events using a temporal point process, which draws similarities in our work's spatio-temporal point process model. However, our work differs from \cite{Li2018} in two major ways. Rather than constructing a generative model, we focus on sequential anomaly detection, which is a different type of problem. Besides, we design a structured reward function that is more suitable for modeling the triggering effects between events and more computationally efficient to carry out. 

There is another work on inverse reinforcement learning related to our work. The work in \cite{Ziebart2008} first proposed a probabilistic approach to the imitation learning problem via maximum entropy. The work proposed an efficient state frequency algorithm that is composed of both backward and forward passes recursively. A more recent similar article by \cite{oh2019sequential} seeks to integrate IRL with anomaly detection based on the above maximum entropy IRL framework. They aim to learn the unknown reward function to test a given sequence. The significant difference, however, is they focus on time series data instead of event data. Also, we formulate the problem as a minimax optimization, whereas they used a Bayesian method to estimate the model parameters. 

A large body of recent works performs sequential anomaly detection using LSTM, similar to the proposed stochastic LSTM used as the adversarial generator in our work. In \cite{Malhotra2016}, the authors proposed an encoder-decoder scheme using LSTM to learn the normal behavior of data and used reconstruction errors to find anomalies. The work of \cite{Nanduri2016} built a Recurrent Neural Network (RNN) model with LSTM structure to conduct anomaly detection for multivariate time series data for flight operations. Another paper from \cite{Luo2017} looks into anomaly detection in videos by convolutional neural networks with the LSTM modeling. It is clear that the LSTM model is versatile for various applications and can model unknown complex sequential data. However, the LSTM used here in this paper is stochastic (as a generative model to capture data distribution), whereas most LSTM architectures are deterministic. Specifically, the input at each time step in the stochastic LSTM is drawn from a random variable whose distribution is specified by the LSTM's parameters.


There is a wide array of existing research in anomaly detection. Principle Component Analysis (PCA) has traditionally been used to detect outliers, which naturally fits into anomaly detection. In \cite{Chalapathy2017} the authors propose a robust auto-encoder model, which is closely related to PCA for anomaly detection, and a deep neural network is introduced for the training process. Additionally, \cite{Chalapathy2018} proposed a one-class neural network to detect anomalies in complex data sets by creating a tight envelope around the normal data. It is improved from the one-class singular value decomposition formulation to be more robust. A closely related work is \cite{Ruff2018}, which looks into one-class anomaly detection through a deep support vector data description model that finds a data-enclosing hyper-sphere with minimum volume. However, most previous studies on one-class anomaly detection assumed independent and identically distributed data samples, whereas we consider data dependency. 

As an important application of anomaly detection, credit card fraud detection has also drawn a lot of research interest \cite{Bolton2002, Kou2004}. Most commonly, supervised methods have been adopted to use a database of known fraudulent/legitimate cases to construct a model that yields a suspicion score for new cases. Traditional statistical classification methods, such as linear discriminant analysis \cite{Wang2018}, logistic classification \cite{Sahin2011}, and $k$-nearest neighbors \cite{Malini2017}, have proved to be effective tools for many applications. However, more powerful tools \cite{Ghosh1994, Maes2002}, especially neural networks, have also been extensively applied. Unsupervised methods are used when there are no prior sets of legitimate and fraudulent observations. A large body of approaches \cite{Bolton2001, Srivastava2008, Tran2018} employed here are usually a combination of profiling and outlier detection methods, which models a baseline distribution to represent the normal behavior and then detect observations departure from this. Compared to the previous unsupervised studies in credit card fraud detection, the most notable feature of our approach is to learn the fraudulent behaviors by ``mimicking'' the limited amount of anomalies via an adversarial learning framework. Our anomaly detector equipped with the deep Fourier kernel is more flexible than conventional approaches in capturing intricate marked spatio-temporal dynamics between events while being computationally efficient. 

Our work is a significant extension of the previous conference paper \cite{Zhu2020}, which studies the one-class sequential anomaly detection using a framework of Generative Adversarial Network (GAN) \cite{Goodfellow2014-1, Goodfellow2014-2} based on the cross-entropy between the real and generated distributions. Here, we focus on a different loss function motivated by imitation learning and MMD distances that is more computationally efficient. In addition, we introduce a new time-varying threshold, which can be learned in a data-driven manner. 

\section{Background: Inverse reinforcement learning}
\label{sec:background-irl}

Since imitation learning is a form of reinforcement learning (RL), in the following, we will provide some necessary background about RL. Consider an agent interacting with the environment. At each step, the agent selects an action based on its current state, to which the environment responds with a reward value and the next state. 
%
%
The \emph{return} is the sum of (discounted) rewards through the agent's trajectory of interactions with the environment. 
The \emph{value function} of a policy describes the expected \emph{return} from taking action from a state. 
The \emph{inverse reinforcement learning} (IRL) aims to find a reward function from the expert demonstrations explaining the expert behavior. 
Seminal works \cite{Abbeel2004, Ng2000} provide a max-min formulation to address the problem. The authors propose a strategy to match an observed expert policy's value function and a learner's behavior.
Let $\pi$ denote the expert policy, and $\pi_\varphi$ denote the learner policy, respectively.
The optimal reward function $r$ can be found as the saddle-point of the following max-min problem \cite{Syed2008}, i.e.,
\begin{equation*}
\begin{aligned}
    \underset{r \in \mathcal{F}}{\max}~\underset{\varphi \in \mathcal{G}}{\min}~\Bigg\{ 
    & \mathbb{E}_{\boldsymbol{x} \sim \pi} \left[ \sum_{i=1}^{N_x} r(x_i, s_i) \right] - \mathbb{E}_{\boldsymbol{z} \sim \pi_\varphi} \left[ \sum_{j=1}^{N_z} r(z_j, s_j) \right] \Bigg\}, \\
    \label{eq:def-reward}
\end{aligned}
\end{equation*}
where $\mathcal{F}$ is the family class for reward function and $\mathcal{G}$ is the family class for learner policy. Here, $\boldsymbol{x} = \{x_1, \dots, x_{N_x}\}$ is a sequence of actions generated by the expert policy $\pi$,  $\boldsymbol{z} = \{z_1, \dots, z_{N_z}\}$ is a roll-out sequence generated from the learner policy $\pi_\varphi$, and $N_x$ and $N_z$ are the numbers of actions for sequences $\boldsymbol{x}$ and $\boldsymbol{z}$, respectively. 
The formulation means that a proper reward function should provide the expert policy a higher reward than any other learner policy in $\mathcal{G}$. The learner can also approach the expert performance by maximizing this reward.


\section{Adversarial sequential anomaly detection}
\label{sec:anomaly-detection}

We aim to develop an algorithm to detect anomalous sequences when the training dataset consists of only abnormal sequences and without normal sequences. In particular, the algorithm will process data sequentially and raise the alarm as soon as possible after the sequence has been identified as anomalous. 
Denote such a detector as $\ell$ with parameter $\theta$. At each time $t$, the detector evaluates a statistic and compares it with a threshold. For a length-$N$ sequence $\boldsymbol{x}$, define $\boldsymbol x_{1:i} := [x_1, \ldots, x_i]^\top$, $i = 1, 2, \ldots, N$ be its first $i$ observations. We define the detector as a stopping rule, which stops and raises the alarm the first time that the detection statistic exceeds the threshold:
\[
       T = \inf \{t: \ell(\boldsymbol{x}_{1:i}; \theta) > \eta_i,~t_i \le t < t_{i+1} \} , \quad i = 1, \ldots, N.
        \label{eq:detector}
\]
Once an alarm is raised, the sequence is flagged as an anomaly. If there is no alarm raised till the end of the time horizon, the sequence is considered normal. Note that the test sequence can be an arbitrary (finite) length.

\subsection{Proposed: Adversarial anomaly detection} 

Assume a set of anomalous sequences drawn from an empirical distribution $\pi$. Since normal sequences are not available, we introduce an \emph{adversarial generator}, which produces ``normal'' sequences that are statistically similar to the real anomalous sequences. The detector has to discriminate the true anomalous sequence from the counterfeit ``normal'' sequences.  
We introduce competition between the anomaly detector and the generator to drive both models to improve their performances until anomalies can distinguish from the worst-case counterfeits. We can also view this approach as finding the ``worst-case'' distribution that defines the ``border region'' for detection. 
Formally, we formulate this as a minimax problem as follows:
\begin{equation}
    \underset{\varphi \in \mathcal{G}}{\min}~\underset{\theta \in \Theta}{\max}~J(\theta, \varphi) \coloneqq 
    \mathbb{E}_{\boldsymbol{x}\sim\pi}~\ell(\boldsymbol{x};\theta) - \mathbb{E}_{\boldsymbol{z}\sim G_z(\varphi)}~\ell(\boldsymbol{z};\theta),
    \label{eq:adv-imit-learning}
\end{equation}
where $G_z$ is an adversarial generator specified by the parameter $\varphi \in \mathcal{G}$ and $\mathcal{G}$ is a family of candidate generators. Here the detection statistic corresponds to $\ell(\theta)$, the log-likelihood function of the sequence specified by $\theta \in \Theta$ and $\Theta$ is its parameter space. The choices of the adversarial generator and the detector are further discussed in Section~\ref{sec:model}. The detector compares the detection statistic to a threshold. We define the following:
\begin{definition}[Adversarial sequential anomaly detector]
    Denote the solution to the minimax problem \eqref{eq:adv-imit-learning} as $(\theta^*, \varphi^*)$. A sequential adversarial detector raises an alarm at the time $i$ if
    \[
        \ell(\boldsymbol{x}_{1:i}; \theta^*) > \eta_i^*,
    \]
    where the time-varying threshold $\eta_i^* \propto \mathbb{E}_{\boldsymbol{z} \sim G_z(\varphi^*)}~\ell(\boldsymbol{z}_{1:i};\theta^*)$.
\label{def:detector}
\end{definition}

\subsection{Time-varying threshold}

\begin{figure}[!t]
\centering
\includegraphics[width=.4\linewidth]{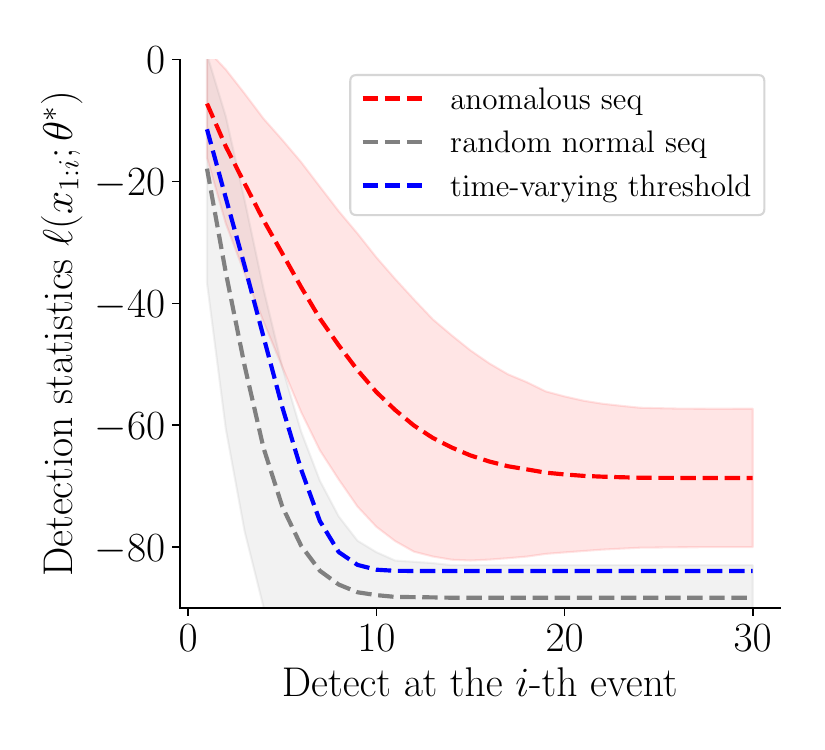}
\caption{An example of the adversarial anomaly detection and threshold using synthetic data with 1000 synthetic sequences. The \emph{two lighter dashed} lines represent mean detection statistics ($\ell(\theta^*)$) for anomalous and normal sequences. The \emph{dashed line in the middle} corresponds to the time-varying threshold suggested by our model. Clearly, the threshold can separate the anomalous sequences from the normal sequences.}
\label{fig:macy-res}
\vspace{-.15in}
\end{figure}

We choose the time-varying threshold $\eta_i^* \propto \mathbb{E}_{\boldsymbol{z} \sim G_z(\varphi^*)}~\ell(\boldsymbol{z}_{1:i};\theta^*)$. Since the value of log-likelihood function $\ell(x_{1:i}; \theta^*)$ for partial sequence observation $\boldsymbol{x}_{1:i}$ may vary over the time step $i$ (the $i$-the event is occurred), we need to adjust the threshold accordingly for making decisions as a function of $i$. Note that our time-varying threshold $\eta_i^*$ is different from sequential statistical analysis, where the threshold for performing detection is usually constant or pre-set (not adaptive to data) based on the known distributions of the data sequence (e.g., set the threshold growing over time as $\sqrt t$ \cite{Siegmund1985}). The rationale behind the design of the threshold $\eta_i^*$ is that, at any given time step, the log-likelihood of the data sequence is larger than that of the generated adversarial sequence; therefore, $\eta_i^*$ provides the tight lower bound for the likelihood of anomalous sequences $\ell(\boldsymbol{x};\theta^*)$ due to the minimization in \eqref{eq:adv-imit-learning}. That is, for any $\varphi \in \mathcal{G}$, 
\begin{align*}
0 \le &~\mathbb{E}_{\boldsymbol{x}\sim\pi}~\ell(\boldsymbol{x}_{1:i};\theta^*) - \eta_i^* \\
\le &~\mathbb{E}_{\boldsymbol{x}\sim\pi}~\ell(\boldsymbol{x}_{1:i};\theta^*) - \mathbb{E}_{\boldsymbol{z}\sim G_z(\varphi)}~\ell(\boldsymbol{z}_{1:i};\theta^*).
\end{align*}
The adversarial sequences drawn from $G_z(\varphi^*)$ can be viewed as the normal sequences that are statistically ``closest'' to anomalous sequences. Therefore, the log-likelihood of such sequences in the ``worst-case'' scenario defines the ``border region'' for detection. In practice, the threshold $\eta_i^*$ can be estimated by $1/n^\prime \sum_{l=1}^{n^\prime} \ell(\boldsymbol{z}^l_{1:i}; \theta^*)$, where $\{\boldsymbol{z}^l\}_{l=1,\dots,n'}$ are adversarial sequences sampled from $G_z(\varphi)$  and $n^\prime$ is the number of the sequences.
As a real example presented in Figure~\ref{fig:macy-res}, the time-varying threshold in the \emph{darker dashed} line can sharply separate the anomalous sequences from the normal sequences. More experimental results are presented in Section~\ref{sec:experiments}.

\begin{figure}[!t]
\centering
\includegraphics[width=.7\linewidth]{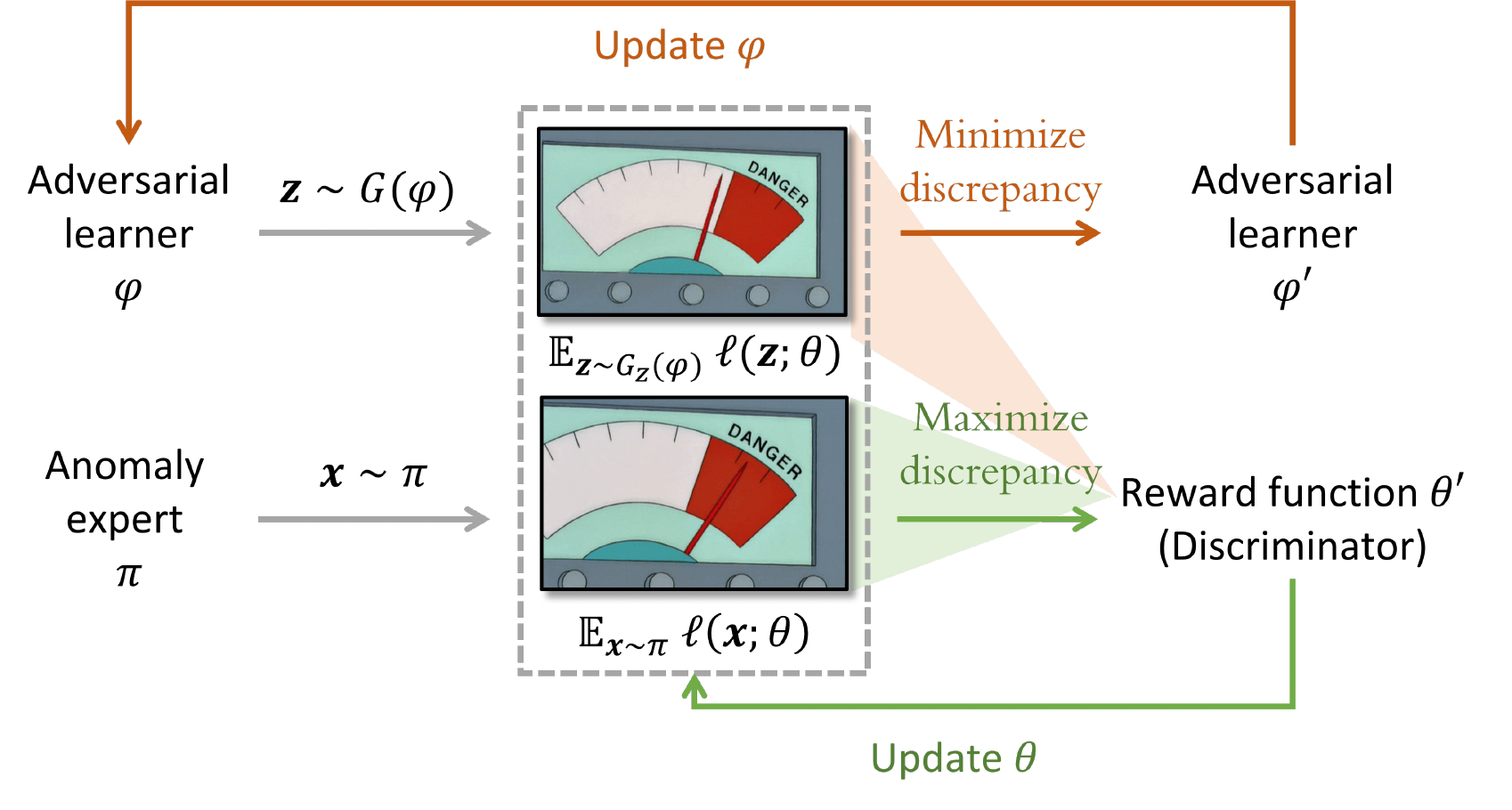}
\caption{An illustration of the imitation learning interpretation. }
\label{fig:illustration-il}
\vspace{-.15in}
\end{figure}

\subsection{Connection to imitation learning} 

The problem formulation \eqref{eq:adv-imit-learning} resembles the minimax formulation in inverse reinforcement learning (IRL) proposed by seminal works \cite{Abbeel2004, Ng2000}. 
As shown in Figure~\ref{fig:illustration-il}, an observed anomalous samples $\boldsymbol{x} \sim \pi$ can be regarded as an expert demonstration sampled from the expert policy $\pi$, where each $\boldsymbol x = \{x_1, \dots, x_{N}\}$ is a sequence of events with length of $N$ and the sequences may be of different lengths. Each event $x_i, i=1,\dots,N$ of the sequence is analogous to the $i$-th action made by the expert given the history of past events $\{x_1, x_2, \dots, x_{i-1}\}$ as the corresponding state.
Accordingly, the generator can be regarded as a learner that generates convincing counterfeit trajectories.

The log-likelihood of observed sequences can be interpreted as undiscounted \emph{return}, i.e., the accumulated sum of rewards evaluated at past actions, where the logarithm of the conditional probability of each event (action) can be regarded as the event's reward. 
The ultimate goal of the proposed framework \eqref{eq:adv-imit-learning} is to close the gap between the return of the expert demonstrations and the return of the learner trajectories so that the counterfeit trajectories can meet the lower bound of the real demonstrations.

\subsection{Connection to MMD-like distance} 

The proposed approach can also be viewed as minimizing a maximum mean discrepancy (MMD)-like distance metric \cite{Gretton2012} as illustrated in Figure~\ref{fig:illustration-mmd}. 
More specifically, the maximization in \eqref{eq:adv-imit-learning} is analogous to an MMD metric in a reduced function class specified by $\Theta$, i.e.,
$
    \sup_{\theta \in \Theta}~\mathbb{E}_{\boldsymbol{x}\sim\pi}~\ell(\boldsymbol{x}; \theta) - \mathbb{E}_{\boldsymbol{z}\sim \varphi}~\ell(\boldsymbol{z};\theta)
$,
where $\Theta$ may not necessarily be a space of continuous, bounded functions on sample space. As shown in \cite{Gretton2012}, if $\Theta$ is sufficiently expressive (universal), e.g., the function class on reproducing kernel Hilbert space (RKHS), then maximization over such $\Theta$ is equivalent to the original definition. Based on this, we select a function class that serves our purpose for anomaly detection (characterizing the sequence's log-likelihood function), which has enough expressive power for our purposes. Therefore, the problem defined in \eqref{eq:adv-imit-learning} can be regarded as minimizing such an MMD-like metric between the empirical distribution of anomalous sequences and the distribution of adversarial sequences.
The minimal MMD distance corresponds to the best ``detection radius'' that we can find without observing normal sequences.

\begin{figure}[!t]
\centering
\includegraphics[width=.5\linewidth]{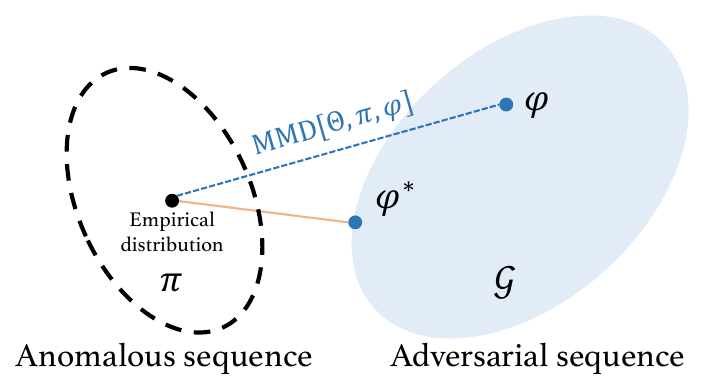}
\caption{The empirical distribution of anomalous sequences is $\pi$. The assumed family of candidate generators is $\mathcal{G}$. Our proposed framework aims to minimize the maximum mean discrepancy (MMD) in a reduced function class $\Theta$ between $\pi$ and $\varphi \in \mathcal{G}$.}
\label{fig:illustration-mmd}
\vspace{-.15in}
\end{figure}

\section{Point process with deep Fourier kernels}
\label{sec:model}

In this section, we present a model for the discrete events, which will lead to the detection statistic (i.e., the form of the likelihood function $\ell(\boldsymbol{x};\theta)$). 
We present a marked Hawkes process model that captures marked spatio-temporal dynamics between events.
The most salient feature of the model is that we develop a novel deep Fourier kernel for Hawkes process (c.f. Section 6.6 \cite{Mohri2012} for discussion of Fourier kernel), where
the deep Fourier kernel empowers the model to characterize the intricate non-linear dependence between events while enabling efficient computation of the likelihood function by leading to a closed-form expression of an integral in the likelihood function -- a notorious difficulty in evaluating the likelihood function for Hawkes processes. 

Assume each observation is a {\it marked spatio-temporal tuple} which consists of time, location, and marks: $x_i = (t_i, m_i)$, where $t_i \in [0, T)$ is the time of occurrence of the $i$th event, and $m_i \in \mathcal{M} \subseteq \mathbb{R}^d$ is the $d$-dimensional mark (here we treat location as one of a mark). The event's time is important because it defines the event's order and the time interval, which carries the key information.

\subsection{Preliminary: Marked temporal point processes}
\label{sec:background-pp}

The marked temporal point processes (MTPPs) \cite{Hawkes1971,Reinhart2017} offer a versatile mathematical framework for modeling sequential data consisting of an ordered sequence of discrete events localized in time and mark spaces (space or other additional information). They have proven useful in a wide range of applications \cite{Embrechts1997, Clifton2011, Luca2014, Rambaldi2016, Li2017}.
Recent works \cite{Du2016, Mei2017, Li2018, Upadhyay2018, Xiao2017A, Xiao2017B, Zhu2020} have achieved many successes in modeling temporal event data (some with marks) correlated in the time domain using Recurrent Neural Networks (RNNs).

Let $\{x_1 , x_2 , \dots , x_{N_T}\}$ represent a sequence of observations. Denote $N_T$ as the number of the points generated in the time horizon $[0, T)$. The events' distributions in MTPPs are characterized via a conditional intensity function $\lambda(t, m|\mathcal{H}_t)$, which is the probability of observing an event in the marked temporal space $[0, T) \times \mathcal{M}$ given the events' history $\mathcal{H}_t = \{ (t_i, m_i)|t_i < t \}$, i.e.,
\begin{equation}
    \lambda(t, m | \mathcal{H}_t)
    = \frac{\mathbb{E}\left[ N([t, t+dt) \times B(m, dm)) | \mathcal{H}_t \right]}{|B(m, dm)| dt},
    \label{eq:def-conditional-intensity}
\end{equation}
where $N(A)$ is the counting measure of events over the set $A \subseteq [0, T) \times \mathcal{M}$ and $|B(m, dm)|$ is the Lebesgue measure of the ball $B(m, dm)$ centered at $m$ with radius $dm$. 
%
Assuming that influence from past events are linearly additive for the current event, the conditional intensity function of a Hawkes process is defined as 
\begin{equation}
    \lambda(t, m | \mathcal{H}_{t}) = \mu + \sum_{t_i < t} g(t - t_i, m - m_i),
    \label{eq:hawkes}
\end{equation}
where $\mu \ge 0$ is the background intensity of events, $g(\cdot, \cdot) \ge 0$ is the \emph{triggering function} that captures spatio-temporal and marked dependencies of the past events. The triggering function can be chosen in advance, e.g., in one-dimensional cases, $g(t-t_i) = \alpha \exp \{ - \beta (t - t_i) \}$.

Let $t_n$ denote the last occurred event before time $t$. The conditional probability density function of a point process is defined as 
\[
    f(t,m|\mathcal{H}_t) = \lambda(t,m|\mathcal{H}_t)
    \exp\left\{ - \int_{t_n}^t \int_\mathcal{M} \lambda(t^\prime, m^\prime|\mathcal{H}_{t^\prime}) dm^\prime dt^\prime \right\}.
\]
The log-likelihood of observing a sequence with $N_T$ events denoted as $\boldsymbol{x} = \{(t_i, m_i)\}_{i=1}^{N_T}$ can be obtained by:
\begin{equation}
    \ell(\boldsymbol{x}; \theta) = \sum_{i=1}^{N_T} \log \lambda(t_i, m_i|\mathcal{H}_{t_i}) - \int_0^T \int_\mathcal{M} \lambda(t, m|\mathcal{H}_t) dm dt.
    \label{eq:pp-loglik}
\end{equation}

\subsection{Hawkes processes with deep Fourier kernel}
\label{sec:pp-deep-fourier-kernel}

One major computational challenge in evaluating the log-likelihood function is the computation of the integral in (\ref{eq:pp-loglik}), which is multi-dimensional and performed in the possibly continuous mark and time-space. It can be intractable for a general model without a carefully crafted structure. 

To tackle this challenge, we adopt an approach to represent the Hawkes process's triggering function via a Fourier kernel. The Fourier features spectrum is parameterized by a deep neural network, as shown in Figure~\ref{fig:score-illustration}.
For the sake of notational simplicity, we denote $x \coloneqq (t, m) \in \mathcal{X}$ as the most recent event and $x' \coloneqq (t', m') \in \mathcal{X},~t' < t$ as an occurred event in the past, where $\mathcal{X} \coloneqq [0, T] \times \mathcal{M} \subset \mathbb{R}^{d+1}$ is the space for time and mark. Define the conditional intensity function as
\begin{equation}
    \lambda(x|\mathcal{H}_{t};\theta) = \mu + \alpha \sum_{t' < t}  K(x, x'),
    \label{eq:lambda-fourier-kernel}
\end{equation}
where $\alpha$ represents the magnitude of the influence from the past, $\mu \ge 0$ is the background intensity of events. The kernel function $K(x, x^\prime)$ measures the influence of the past event on the current event $x, x^\prime \in \mathcal{X}$, and we will parameterize its kernel-induced feature mapping using a deep neural network $\theta \in \Theta$. 

\begin{figure}[!t]
\centering
\includegraphics[width=.7\linewidth]{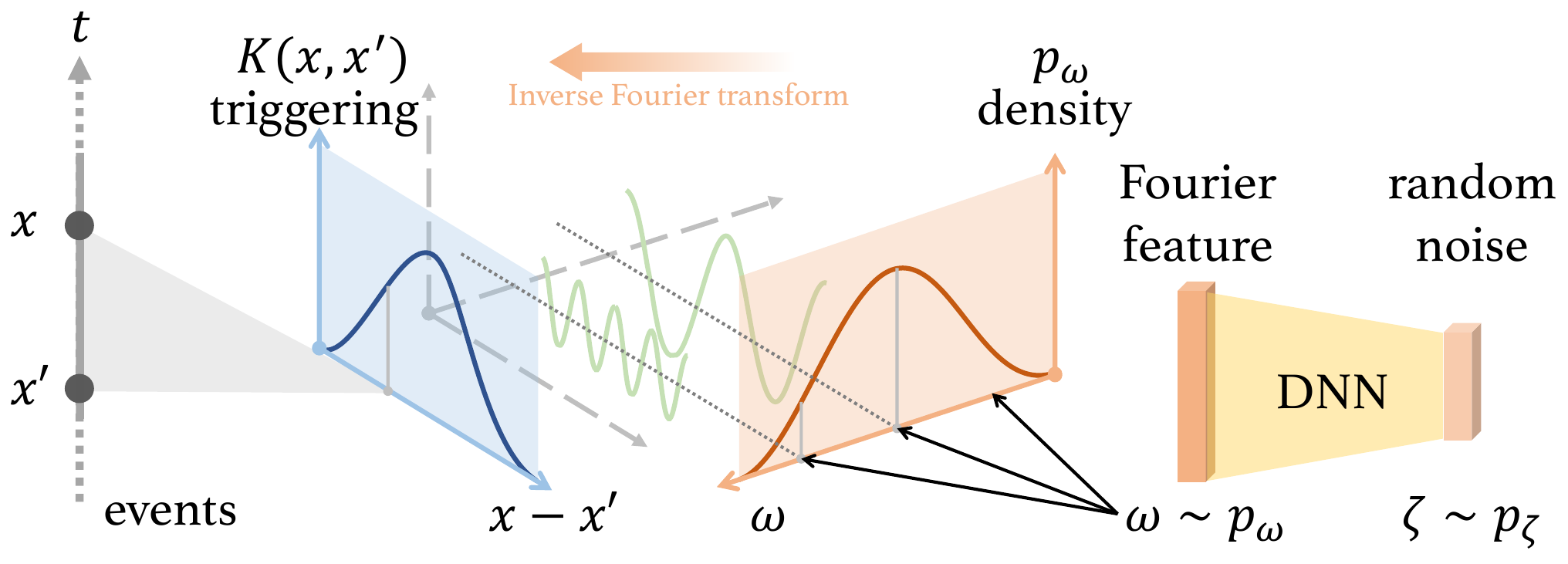}
\caption{An illustration of the Fourier kernel function $K(x, x')$ and its Fourier representation; a deep neural network represents the spectrum of Fourier features.}
\label{fig:score-illustration}
\end{figure}

The formulation of deep Fourier kernel function relies on Bochner’s Theorem \cite{Rudin1962}, which states that any bounded, continuous, and shift-invariant kernel is a Fourier transform of a bounded non-negative measure:
\begin{theorem}[Bochner \cite{Rudin1962}]
\label{thm1}
A continuous kernel of the form $K(x,x^\prime)=g(x-x^\prime)$ defined over a locally compact set $\mathcal{X}$ is positive definite if and only if $g$ is the Fourier transform of a non-negative measure:
\begin{equation}
    K(x, x^\prime) = g(x - x^\prime) =\int_{\Omega}p_\omega(\omega)e^{\boldsymbol{i}w^\top (x - x^\prime)}d\omega,
    \label{eq:bochner-thm}
\end{equation}
where $\boldsymbol{i} = \sqrt{-1}$, $p_\omega$ is a non-negative measure, $\Omega$ is the Fourier feature space, and kernels of the form $K(x,x^\prime)$ are called shift-invariant kernel. 
\end{theorem}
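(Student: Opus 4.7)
The plan is to establish the two directions of the equivalence separately: the sufficient direction ($\Leftarrow$) reduces to a short direct calculation, while the necessary direction ($\Rightarrow$) requires harmonic analysis. For sufficiency, I would suppose $g$ is the Fourier transform of a non-negative measure with density $p_\omega$, then show that $K(x,x')=g(x-x')$ is positive definite by picking arbitrary coefficients $c_1,\dots,c_n \in \mathbb{C}$ and points $x_1,\dots,x_n \in \mathcal{X}$, substituting the representation \eqref{eq:bochner-thm}, and swapping sum with integral (justified by Fubini since $p_\omega$ is finite) to obtain
\begin{equation*}
\sum_{j,k=1}^n c_j \bar{c}_k\, g(x_j - x_k)
= \int_\Omega \Bigl| \sum_{j=1}^n c_j\, e^{\boldsymbol{i}\omega^\top x_j} \Bigr|^2 p_\omega(\omega)\, d\omega \ \geq \ 0,
\end{equation*}
which is non-negative because the integrand is pointwise non-negative.

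For necessity, suppose $g$ is continuous and $K$ is positive definite. The strategy is to show that the Fourier transform $\hat{g}$, interpreted first as a tempered distribution, is in fact a non-negative finite Borel measure. First I would upgrade the discrete positive-definite inequality to an integral form: for every Schwartz function $\phi \in \mathcal{S}(\mathbb{R}^d)$, approximating by Riemann sums and passing to the limit using continuity of $g$ yields
\begin{equation*}
\iint g(x - y)\, \phi(x)\, \overline{\phi(y)}\, dx\, dy \ \geq \ 0.
\end{equation*}
Rewriting the left-hand side as $\langle g,\, \phi \ast \tilde{\phi}\rangle$ with $\tilde{\phi}(x) := \overline{\phi(-x)}$ and applying Parseval's identity transfers the inequality into the frequency domain as $\langle \hat{g},\, |\hat{\phi}|^2\rangle \geq 0$. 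Since $|\hat{\phi}|^2$ ranges over a cone dense in the non-negative Schwartz functions as $\phi$ varies, this establishes that $\hat{g}$ is a non-negative tempered distribution.

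The last and hardest step, which I expect to be the main obstacle, is promoting the non-negative distribution $\hat{g}$ to a bona fide non-negative finite measure $p_\omega$. Here I would invoke the Schwartz--Riesz result that every non-negative tempered distribution on $\mathbb{R}^d$ arises from a Radon measure via $\phi \mapsto \int \phi\, dp_\omega$, then establish finiteness by mollifying $g$ with a Gaussian approximate identity and evaluating at the origin: since $g$ is continuous at $0$, passing to the limit recovers $g(0) = \int_\Omega dp_\omega < \infty$, so $p_\omega$ has finite total mass. Care is needed because $g$ is only assumed bounded and continuous (not integrable), so $\hat{g}$ does not exist as a classical $L^1$ function; one must work in the distributional setting and combine positivity with continuity of $g$ at the origin to guarantee a genuine finite measure representation, and then verify by Fourier inversion that this measure indeed reproduces $g$ through \eqref{eq:bochner-thm}.
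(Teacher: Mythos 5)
The paper offers no proof of Theorem~\ref{thm1}: it is Bochner's classical theorem, stated with a citation to \cite{Rudin1962} and used as a black box, so there is no in-paper argument to compare yours against. Judged on its own, your outline is the standard distribution-theoretic proof on $\mathbb{R}^d$ and is essentially sound. The sufficiency computation is exactly right, and for necessity you have correctly isolated the genuinely delicate steps: passing from the discrete to the integral form of positive definiteness (fine for bounded continuous $g$, and boundedness itself follows from positive definiteness via $|g(x)|\le g(0)$); the density of the cone $\{|\hat\phi|^2\}$ in the non-negative Schwartz functions, which needs a regularization such as $\sqrt{\psi+\epsilon e^{-|x|^2}}$ because a non-negative Schwartz function need not admit a Schwartz square root; and the promotion of the non-negative tempered distribution to a finite measure via Riesz--Schwartz together with Gaussian mollification and monotone convergence at the origin. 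For comparison with the cited source: Rudin's proof works on a general locally compact abelian group via positive functionals on $L^1$ and the inversion theorem, which is more general but less elementary; your tempered-distribution route is the standard $\mathbb{R}^d$-specific alternative, and either is legitimate here. Two caveats concern the statement rather than your argument: Bochner's theorem (and your proof) yields a finite non-negative \emph{measure}, not necessarily one with a density $p_\omega(\omega)\,d\omega$ as \eqref{eq:bochner-thm} suggests (take $g\equiv 1$, whose representing measure is a point mass at the origin); and the necessity direction requires positive definiteness on all of $\mathbb{R}^d$, not merely on a locally compact subset $\mathcal{X}$, so the theorem as printed is slightly looser than what either proof delivers.
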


If a shift-invariant kernel in (\ref{eq:bochner-thm}) is positive semi-definite and scaled such that $g(0) = 1$, Bochner’s theorem ensures that its Fourier transform $p_\omega$ can be viewed as a probability distribution function since it normalize to 1 and is non-negative. In this sense, the spectrum $p_\omega$ can be viewed as the distribution of $r$-dimensional Fourier features indexed by $\omega \in \Omega \subset \mathbb{R}^r$. 
Hence, we may obtain a triggering function in \eqref{eq:lambda-fourier-kernel} between two events $x, x^\prime \in \mathcal{X} \subset \mathbb{R}^{d+1}$ which satisfies the ``kernel embedding'': 
\begin{proposition}
Let the triggering function $K$ be a continuous real-valued shift-invariant kernel and $p_\omega$ a probability distribution function. Then
\begin{equation}
    K(x, x^\prime) \coloneqq \mathbb{E}_{\omega\sim p_\omega}
    \big [ \phi_{\omega}(x) \cdot \phi_{\omega}(x^\prime) \big ],
    \label{eq:triggering-func}
\end{equation}
where $\phi_{\omega}(x) \coloneqq \sqrt{2} \cos(\omega^\top W x + u)$ and $W \in \mathbb{R}^{r \times (d+1)}$ is a weight matrix. 
These Fourier features $\omega \in \Omega \subset \mathbb{R}^r$ are sampled from $p_{\omega}$ and $u$ is drawn uniformly from $[0, 2\pi]$.
\label{prop:triggering-func}
\end{proposition}

In practice, the expression \eqref{eq:triggering-func} can be approximated empirically, i.e.,
\begin{equation}
    \widetilde K(x, x') = \frac{1}{D} \sum_{k=1}^D \phi_{\omega_k}(x) \cdot \phi_{\omega_k}(x^\prime) = \Phi(x)^\top \Phi(x^\prime),
    \label{eq:triggering-func-estimation}
\end{equation}
where $\omega_k,~k=1,\dots,D$ are $D$ Fourier features sampled from the distribution $p_\omega$.
The vector $\Phi(x) \coloneqq [ \phi_{\omega_1}(x), \dots, \phi_{\omega_D}(x) ]^\top$ can be viewed as the approximation of the kernel-induced feature mapping for the score. In the experiments, we substitute $\exp\{\boldsymbol{i}w^\top (x - x^\prime)\}$ with a real-valued feature mapping, such that the probability distribution $p_\omega$ and the kernel $K$ are real \cite{Rahimi2008}.

The next proposition shows the empirical estimation (\ref{eq:triggering-func-estimation}) converges to the population value uniformly over all points in a compact domain $\mathcal{X}$ as the sample size $D$ grows. It is a lower variance approximation to \eqref{eq:triggering-func}.
\begin{proposition}
Assume $\sigma_p^2 = \mathbb{E}_{\omega \sim p_\omega} [\omega^\top \omega] < \infty$ and a compact set $\mathcal{X} \subset \mathbb{R}^{d+1}$. Let $R$ denote the radius of the Euclidean ball containing $\mathcal{X}$. Then for the kernel-induced feature mapping $\Phi$ defined in \eqref{eq:triggering-func-estimation}, we have
\begin{equation}
    \mathbb{P}\left\{\underset{x, x^\prime \in \mathcal{X}}{\sup} \left | \Phi(x)^\top \Phi(x^\prime) - K(x, x^\prime) \right | \ge \epsilon \right\}
    \le~\left(\frac{48 R \sigma_p}{\epsilon}\right)^2\exp\left\{ - \frac{D \epsilon^2}{4(d+3)} \right\}.
    \label{eq:convergence}
\end{equation} 
\label{prop:triggering-func-convergence}
\end{proposition}
The proposition ensures that kernel function can be consistently estimated using a finite number of Fourier features. In particular, note that for an error bound $\epsilon$, the number of samples needed is on the order of $D = O((d+1)\log(R\sigma_p/\epsilon) / \epsilon^2)$, which grows linearly as data dimension $d$ increases, implying the sample complexity is mild in the high-dimensional setting.

\begin{figure*}[!t]
\centering
\includegraphics[width=.75\linewidth]{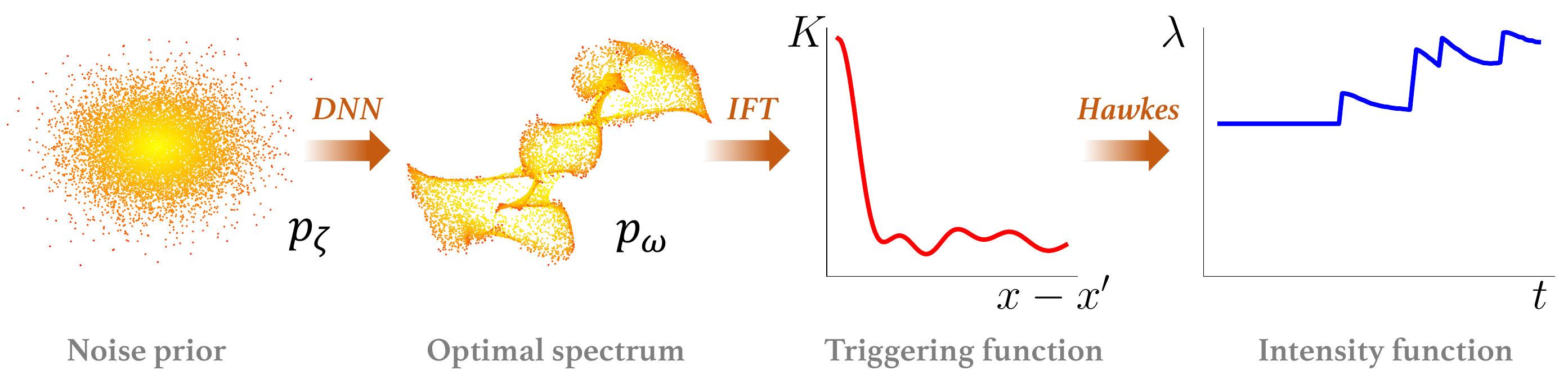}
\caption{An instance of calculating the conditional intensity $\lambda$ through performing inverse Fourier transform: (1) generate random noise; (2) map the noises to the frequencies according to the optimal spectrum; (3) performs inverse Fourier transform (IFT) in the frequency domain and obtain the triggering function; (4) calculate the intensity function based on the triggering function.}
\label{fig:fourier-kernel-exp}
\end{figure*}

To represent the distribution $p_{\omega}$, we assume it is a transformation of random noise $\zeta \sim p_\zeta$ through a non-linear mapping $\psi_0: \mathbb{R}^q \rightarrow \mathbb{R}^r$, as shown in Figure~\ref{fig:score-illustration}, where $\psi_0$ is a differentiable and it is represented by a deep neural network and $q$ is the dimension of the noise. Roughly speaking, $p_\omega$ is the probability density function of $\psi_0(\zeta)$, $\zeta\sim p_\zeta$.
Note that the triggering kernel is jointly controlled by the deep network parameters and the weight matrix $ W $. We represent the Fourier feature generator as $G_\zeta$ and denote its parameters as  $\theta \in \Theta$. 

Figure~\ref{fig:fourier-kernel-exp} gives an illustrative example of representing the conditional intensity given sequence history using our approach. We choose $q = r = 2$ to visualize the noise prior $p_\zeta$ and the optimal spectra $p^*_\omega$ in a two-dimensional space. The optimal spectrum learned from data uniquely specifies a kernel function capable of capturing various non-linear triggering effects. Unlike Hawkes processes, underlying long-term influences of some events, in this case, can be preserved in the intensity function.

\subsection{Efficient computation of log-likelihood function} 

As discussed in Section~\ref{sec:pp-deep-fourier-kernel}, the technical difficulty of evaluating the log-likelihood function is to perform the multi-dimensional integral of the kernel function. In particular, given a sequence of events $\boldsymbol{x}$, the log-likelihood function of our model can be written by substituting the conditional intensity function in \eqref{eq:pp-loglik} with \eqref{eq:lambda-fourier-kernel}, and thus we need to evaluate $\int_{\mathcal{X}} \lambda(x|\mathcal{H}_t;\theta) dx$. 
In many existing works, this term is carried out by numerical integration,
which can be computationally expensive. For instance, if we randomly sample $\kappa$ points in a $d$-dimensional space and the total number of events is $N$, the computational complexity will be $O(\kappa DN)$ ($\kappa \gg N^d$) using common numerical integration techniques.
Here we present a way to simplify the computation by deriving a closed-form expression for the integral as presented in the following proposition -- a benefit offered by the Fourier kernel.


\begin{proposition}[Integral of conditional intensity function]
Let $t_{N_T+1} = T$ and $t_0 = 0$.
Given ordered events $\{x_1, \dots, x_{N_T}\}$ in the time horizon $[0, T]$. The integral term in the log-likelihood function 
can be written as
\begin{equation}
\begin{aligned}
    & \int_{\mathcal{X}} \lambda(x|\mathcal{H}_t;\theta) dx 
    = \mu T (b-a)^d + \frac{1}{D} \sum_{k=1}^D \sum_{i=0}^{N_T} \sum_{t_j < t_i} \\
    & \cos\left(-\omega_k^\top W x_j \right)
    \cos\left(\frac{t_{i+1} + t_{i}}{2}\right) \sin\left(\frac{t_{i+1} - t_{i}}{2}\right) \cos^d\left(\frac{b + a}{2}\right) \sin^d\left(\frac{b-a}{2}\right)
    \prod_{\ell=1}^{d+1} \frac{2 e^{\omega_k^\top w_\ell}}{\omega_k^\top w_\ell},
    \label{eq:loglik-integral}
\end{aligned}
\end{equation}
where $w_\ell, \ell = 1,\dots, d$ is the $\ell$-th column vector in the matrix $W$, and $[a, b]$ are the range for each dimension of the mark space $\mathcal{M}$. 
Note that the computational complexity is $O(DN)$.
\label{prop:loglik-integral}
\end{proposition}

\begin{remark}
From the right-hand side of \eqref{eq:loglik-integral}, the second term only depends on the weight matrix $W$, $D$ randomly sampled Fourier features, the time of events that occurred before $t$, and the region of the marked space. If we re-scale the range of each coordinate of the mark to be $[0 ,2\pi]$, i.e., $b=2\pi$ and $a=0$, then the second term of the integral equals to 0 and the integral defined in \eqref{eq:loglik-integral} can be further simplified as 
\[
    \int_{\mathcal{X}} \lambda(x|\mathcal{H}_t;\theta) dx = \mu T (2 \pi)^d.
\]
In particular, when we only consider time ($d=0$), the integral becomes:
\[
    \int_{\mathcal{X}} \lambda(x|\mathcal{H}_t;\theta) dx = \mu T + \frac{1}{D} \sum_{k=1}^D \sum_{i=0}^{N_T} \sum_{t_j < t_i} \cos\left(-\omega_k^\top W x_j \right)
    \cos\left(\frac{t_{i+1} + t_{i}}{2}\right) \sin\left(\frac{t_{i+1} - t_{i}}{2}\right) \frac{2 e^{\omega_k^\top W}}{\omega_k^\top W}.
\]
\label{rmk:simple-integral}
\end{remark}

\subsection{Recursive computation of log-likelihood function}
\label{sec:online-detection}

Note that, leveraging the conditional probability decomposition, we can compute of the log-likelihood function $\ell(\boldsymbol{x}_{1:i};\theta^*)$ recursively:
\begin{equation}
\begin{aligned}
    \ell(\boldsymbol{x}_{1:1};\theta^*) =&~ \log f(x_1 | \mathcal{H}_{t_1});\\
    \ell(\boldsymbol{x}_{1:i};\theta^*) =&~\ell(\boldsymbol{x}_{1:i-1};\theta^*) + \log f(x_i | \mathcal{H}_{t_{i}}; \theta^*),~\forall i > 1,
    \label{eq:recursive-log-liklihood}
\end{aligned}
\end{equation}
where 
\[
    f(x_i | \mathcal{H}_{t_i};\theta) = \lambda(x_i|\mathcal{H}_{t_i};\theta) e^{ - \mu (t_i - t_{i-1}) (2\pi)^d}.
\]
This recursive expression makes it convenient to evaluate the detection statistic sequentially and perform online detection, which we summarize in Algorithm~\ref{algo:online-detection}.

\begin{algorithm}[!t]
\SetAlgoLined
    {\bfseries Input:} An unknown sequence $\boldsymbol{x}$ with $N_T$ events and optimal model parameters $\theta^*, \varphi^*$\;
    Generate $D$ Fourier features from $G_\zeta(\theta^*)$ denoted as $\widehat{\Omega} = \{\omega_k\}_{k=1,\dots,D}$\;
    Generate $n'$ adversarial sequences from $G_z(\varphi^*)$ denoted as $\widehat{Z} = \{\boldsymbol{z}^l\}_{l=1,\dots,n'}$\;
    \While{$i \le N_T$}{
        Compute the log-likelihood $\ell(\boldsymbol{x}_{1:i};\theta^*)$ given $\widehat{\Omega}$ according to \eqref{eq:recursive-log-liklihood}\;
        $\eta^*_i \leftarrow 1/n^\prime \sum_{l=1}^{n^\prime} \ell(\boldsymbol{z}^l_{1:i}; \theta^*)$\;
        \If{$\ell(\boldsymbol{x}_{1:i};\theta) \ge \eta_i^*$}{
            Declare that it is an anomaly and record the stopping time $t_i$\;
        }
        $i \leftarrow i + 1$\;
    }
    Declare that it is not an anomaly\;
\caption{Online detection algorithm}
\label{algo:online-detection}
\end{algorithm}

\section{Adversarial sequence generator}
\label{sec:generator}

Now we describe the parameterization for the adversarial sequence generator. To achieve rich representation power for the adversarial generator $G_z$, we borrow the idea of the popular Recurrent Neural Network (RNN) structure. 
%

\begin{figure}[!t]
\centering 
\includegraphics[width=.6\linewidth]{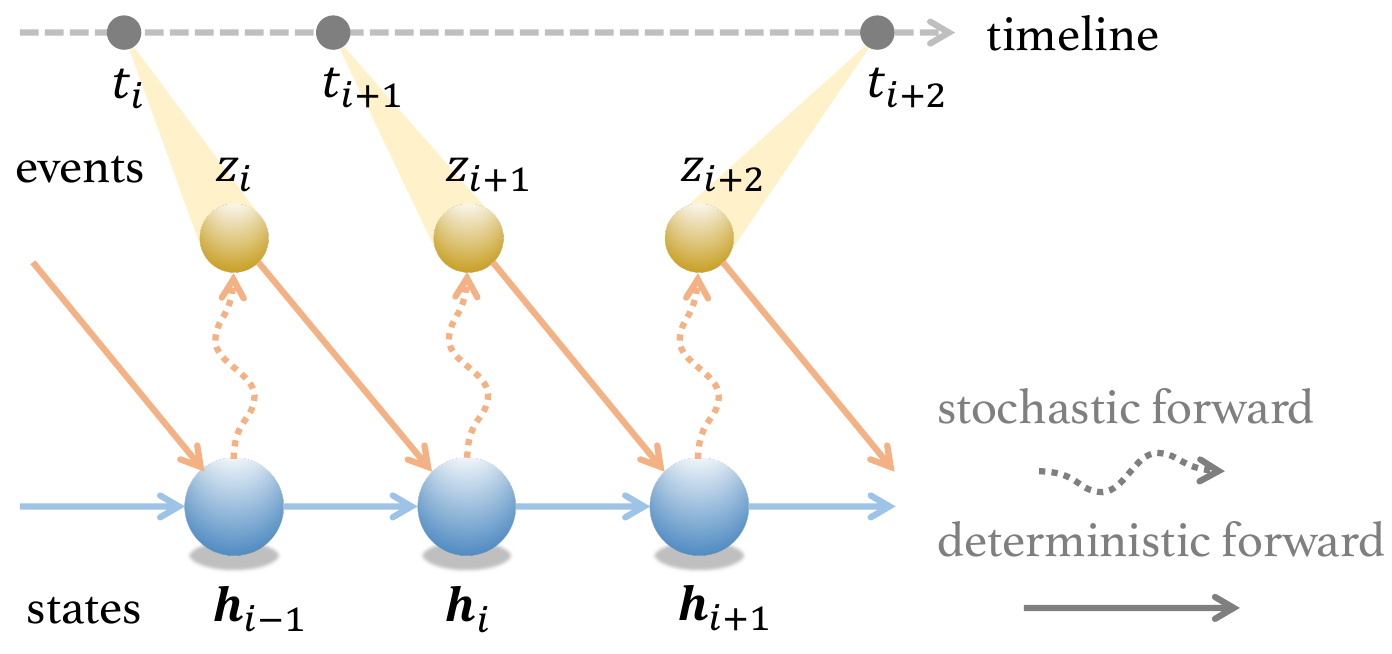}
\caption{RNN-based adversarial sequence generator.}
\label{fig:stoch-lstm}
\end{figure}

In particular, we develop an RNN-type generator with stochastic neurons \cite{Li2018, Chung2015} as shown in Figure~\ref{fig:stoch-lstm}, which can represent the nonlinear and long-range sequential dependency structure. 
Denote the $i$-th generated adversarial event as $z_i \coloneqq (t_{i-1} + \Delta t_i, m_i)$, where $\Delta t_i$ is the time interval between event $z_{i-1}, z_i$.
The generating process is described below:
\begin{align*}
    [\Delta t_i, m_i^\top]^\top & \sim \mathcal{N}(\boldsymbol{\mu}_{i-1}, \text{diag}(\boldsymbol{\sigma}_{i-1})),\\
    [\boldsymbol{\mu}_{i}, \boldsymbol{\sigma}_{i}^\top]^\top & = \psi_1(\boldsymbol{h}_i), \\
    \boldsymbol{h}_{i} & = \psi_2(\boldsymbol{h}_{i-1}, z_i),~~i=1,\dots,N_T,\\
    \boldsymbol{h}_0 & = \boldsymbol{0},~&
\end{align*}
where the hidden state $\boldsymbol{h}_i \in \mathbb{R}^{p}$ encodes the sequence of past events $\{z_1, \dots, z_{i-1}\},z_i\in\mathcal{X}$; $\mathcal{N}(\boldsymbol{\mu}, \Sigma)$ stands for the multivariate Gaussian distribution with mean $\boldsymbol{\mu} \in \mathbb{R}^{d+1}$ and covariance matrix $\Sigma \in \mathbb{R}^{({d+1})\times ({d+1})}$; here we only consider variance terms and thus the covariance matrix is diagonal with diagonal entries specified by a vector $\boldsymbol \sigma_i$, and $\mbox{diag}(\boldsymbol x)$ means to convert the vector $\boldsymbol x$ to a diagonal matrix.
Here we adopt the (two-sided) truncated normal distribution in our adversarial sequence generator by bounding the support of each mark to the interval $(a, b)$. 
The \emph{probabilistic density function (p.d.f.)} therefore is given by \emph{
$
\mathcal{N}(x | \mu, \sigma, a, b) = (1/\sigma)\phi((x-\mu)/\sigma) / ({\Phi((b-\mu)/\sigma) - \Phi((a-\mu)/\sigma})),
$}
where $\phi(x)$ is the p.d.f. of a standard normal distribution and $\Phi(x)$ is the corresponding \emph{cumulative density function (c.d.f.)}; $\mu, \sigma$ are represented by the LSTM structure, and $a, b$ are determined such that the percentage of density that lie within an interval for the normal is 99.7\% (the so-called  three-sigma rule-of-thumb).
Note that the process stops running until $t_{i} < T$ and $t_{i} + \Delta t_{i+1} \ge T$.

Function $\psi_2: \mathbb{R}^{p + d + 1} \rightarrow \mathbb{R}^{p}$ is an extended LSTM cell
and function $\psi_1: \mathbb{R}^p \rightarrow \mathbb{R}^{(d+1)^2 + d + 1}$ 
can be any nonlinear mappings.
There are two significant differences from the vanilla version of RNNs: (1) the outputs are sampled from hidden states rather than obtained by deterministic transformations (as in the vanilla version); randomly sampling will allow the learner to explore the events' space; (2) the sampled time point will be fed back to the RNN. 
We note that the model architecture for $\psi_1$ may be problem-specific. For example, $\psi_1$ can be represented by convolution neural network (CNN) \cite{lecun1995convolutional} if the high dimensional marks are images and can be represented by LSTM or \emph{Bidirectional Encoder Representations from Transformers (BERT)} \cite{DevlinCLT19} if the marks are text. In this paper, because the mark is three-dimensional, we use a fully-connected neural network to represent $\psi_1$, which achieves significantly better performance than baselines. The set of all trainable parameters in $\psi_1, \psi_2$ are denoted by $\varphi \in \mathcal{G}$.


\begin{algorithm}[!t]
\SetAlgoLined
    {\bfseries input:} dataset $X = \{\boldsymbol{x}^i\}_{i=1,\dots,n}$\;
    {\bfseries initialization:} model parameters $\theta, \varphi$\;
    \For{$1,\dots,M_0$}{
        (1) Randomly draw $n''$ training sequences from $X$ denoted as $\widehat{X} = \{\boldsymbol{x}^l \in X\}_{l=1,\dots,n''}$\;
        (2) Generate $n'$ adversarial sequences from $G_z(\varphi)$ denoted as $\widehat{Z} = \{\boldsymbol{z}^l\}_{l=1,\dots,n'}$\;
        (3) Generate $D$ Fourier features from $G_\zeta(\theta)$ denoted as $\widehat{\Omega} = \{\omega_k\}_{k=1,\dots,D}$\;
        Update $\varphi$ by descending gradient given $\widehat{X}, \widehat{Z}, \widehat{\Omega}$:
        \[
            \nabla_{\varphi} \frac{1}{n''}\sum_{l=1}^{n''} \ell(\boldsymbol{x}^l;\theta) - \frac{1}{n'}\sum_{l=1}^{n'} \ell(\boldsymbol{z}^l;\theta);
        \]\\
        \For{$1,\dots,M_1$}{
            Redo steps (1), (2), (3) to obtain new $\widehat{X}, \widehat{Z}, \widehat{\Omega}$\;
            Update $\theta$ by ascending gradient given $\widehat{X}, \widehat{Z}, \widehat{\Omega}$:
            \[
                \nabla_{\theta} \frac{1}{n''}\sum_{l=1}^{n''} \ell(\boldsymbol{x}^l;\theta) - \frac{1}{n'}\sum_{l=1}^{n'} \ell(\boldsymbol{z}^l;\theta);
            \]\\
        }
    }
\caption{Adversarial learning algorithm}
\label{algo:learning}
\end{algorithm}

We learn the adversarial detector's parameters in an off-line fashion by performing alternating minimization between optimizing the generator $G_z(\varphi)$ and optimizing the anomaly discriminator $\ell(\theta)$, using stochastic gradient descent. Let $M_0$ be the number of iterations, and $M_1$ be the number of steps to apply to the discriminator. 
Let $n', n'' < n$ be the number of generated adversarial sequences and the number of training sequences in a mini-batch, respectively. 
We follow the convention of choosing mini batch size in stochastic optimization algorithm \cite{li2014efficient}, and only require to use the same value for both $n'$ and $n''$. 
There is a clear trade-off between the model generalization and the estimation accuracy. 
Large $n'$ and $n''$ tend to converge to sharp minimizers of the training and testing functions, which lead to poorer generalization. 
In contrast, small $n'$ and $n''$ consistently converge to flat minimizers due to the inherent noise in the gradient estimation.
We also note that large $n'$ and $n''$ may cause the training to be computationally expensive. 
The learning process is summarized in Algorithm~\ref{algo:learning}.

\section{Numerical experiments}
\label{sec:experiments}

In this section, comprehensive numerical studies are presented to compare the proposed adversarial anomaly detector's performance with the state-of-the-art. 

\subsection{Comparison and performance metrics}
We compare our method (referred to as \texttt{AIL}) with \emph{four} state-of-the-art approaches: the one-class support vector machine \cite{Zhang2007} (\texttt{One-class SVM}), the cumulative sum of features extracted by principal component analysis \cite{Page1954} (\texttt{PCA+CUMCUM}), the local outlier factor \cite{Breunig2000} (\texttt{LOF}), and a recent work  leveraging IRL framework for sequential anomaly detection \cite{oh2019sequential} (\texttt{IRL-AD}).

The performance metrics are standard, including precision, recall, and $F_1$ score, all of which have been widely used in the information retrieval literature \cite{Michael2002}. This choice is because anomaly detection can be viewed as a binary classification problem, where the detector identifies if an unknown sequence is an anomaly. The $F_1$ score combines the \textit{precision} and \textit{recall}. 
Define the set of all true anomalous sequences as $U$, the set of positive sequences detected by the optimal detector as $V$.
Then precision $P$ and recall $R$ are defined by: 
\[
    P = |U \cap V|/|V|,~R = |U \cap V|/|U|,
\] 
where $|\cdot|$ is the number of elements in the set. 
The $F_1$ score is defined as $F_1 = 2 P R / (P + R)$ and the higher $F_1$ score the better. Since positive and negative samples in real data are highly unbalanced, we do not use the \emph{receiver operating characteristic (ROC)} curve (true positive rate versus false-positive rate) in our setting. 

\begin{figure*}[!t]
\centering
\begin{subfigure}[h]{0.99\linewidth}
\includegraphics[width=\linewidth]{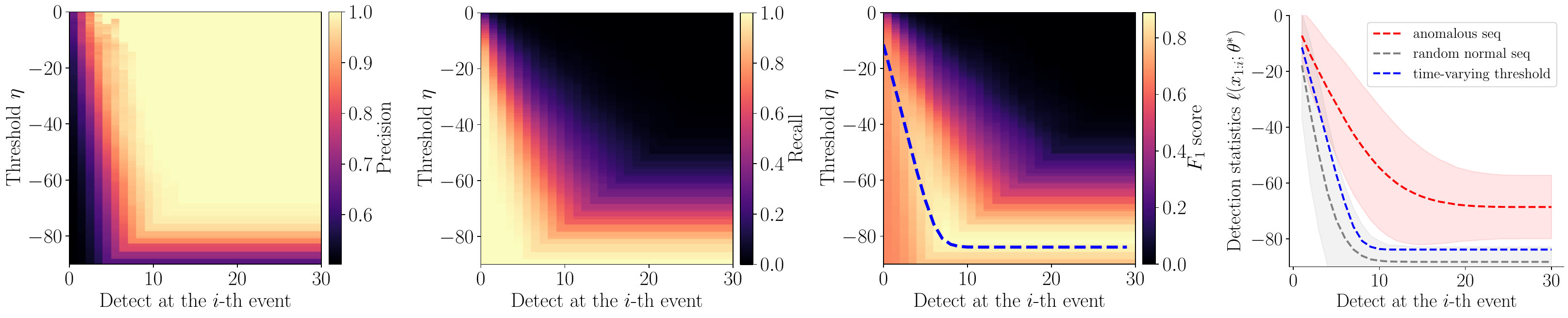}
\caption{singleton synthetic data}
\vspace{.1in}
\end{subfigure}
\vfill
\begin{subfigure}[h]{0.99\linewidth}
\includegraphics[width=\linewidth]{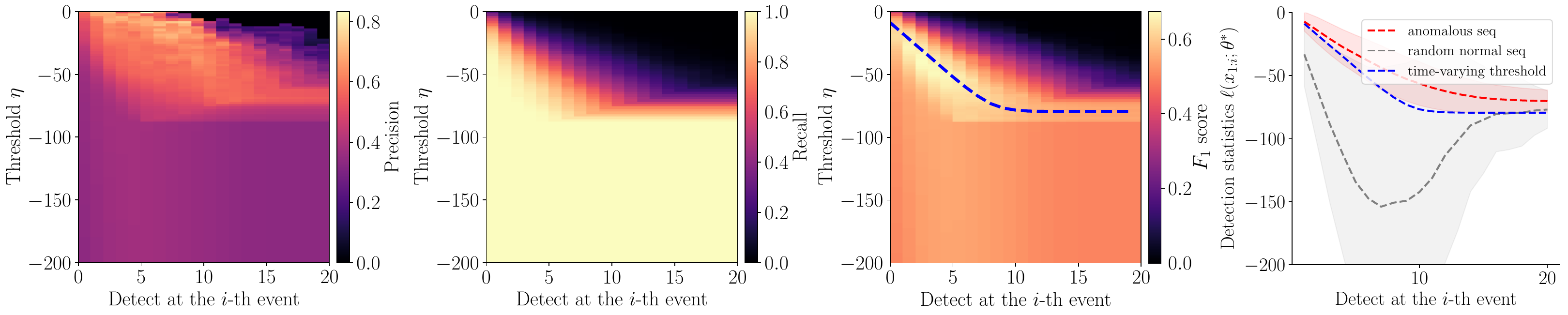}
\caption{composite synthetic data}
\vspace{.1in}
\end{subfigure}
\vfill
\begin{subfigure}[h]{0.99\linewidth}
\includegraphics[width=\linewidth]{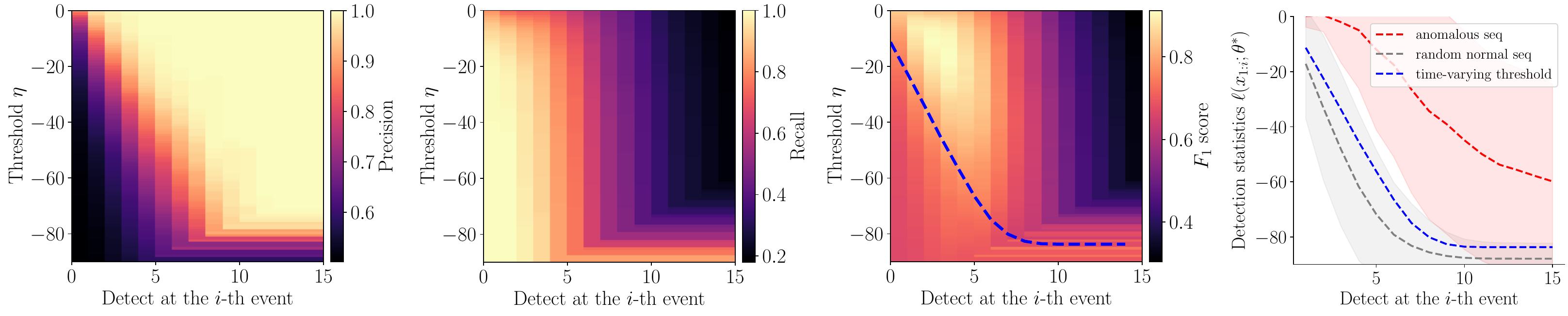}
\caption{real credit card fraud data}
\vspace{.1in}
\end{subfigure}
\vfill
\begin{subfigure}[h]{0.99\linewidth}
\includegraphics[width=\linewidth]{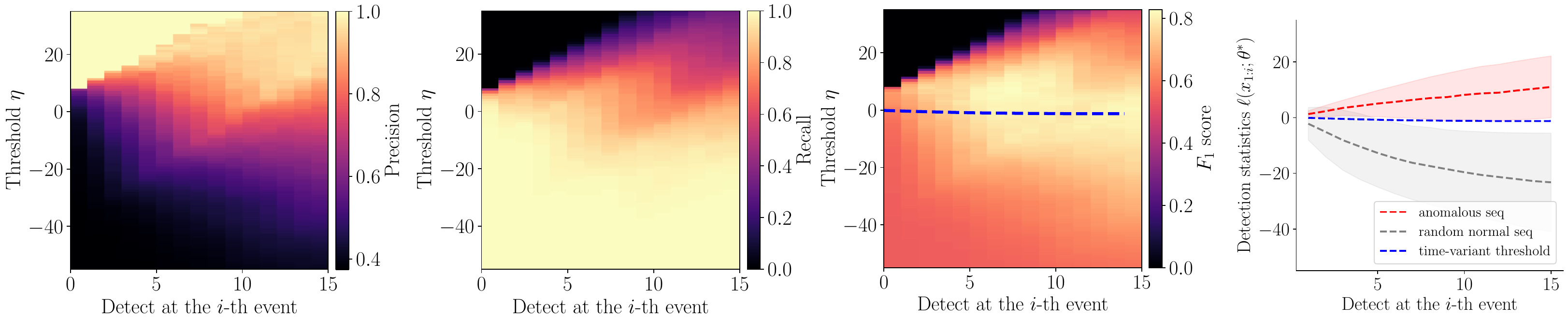}
\caption{real robbery data}
\vspace{.1in}
\end{subfigure}
\caption{The performance of our method (\texttt{AIL}) on four data sets. The first three columns correspond to the precision, recall, and $F_1$ score of our method using different thresholds. The \emph{dashed} lines in the third column indicate our time-varying thresholds. The fourth column shows the step-wise detection statistics for both anomalous and normal sequences.}
\label{fig:results}
\vspace{-.15in}
\end{figure*}

\begin{figure}[!t]
\centering
\begin{subfigure}[h]{0.24\linewidth}
\includegraphics[width=\linewidth]{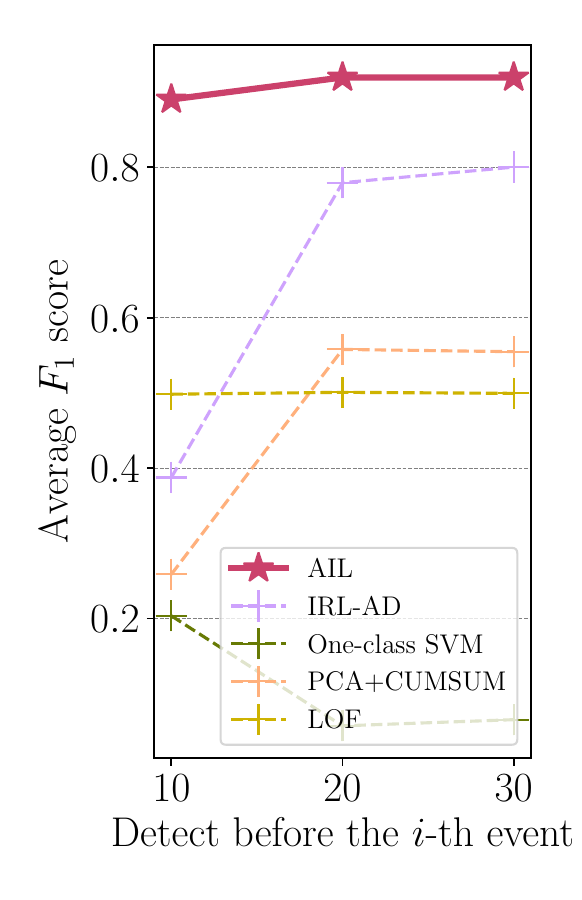}
\caption{singleton synthetic}
\end{subfigure}
\begin{subfigure}[h]{0.24\linewidth}
\includegraphics[width=\linewidth]{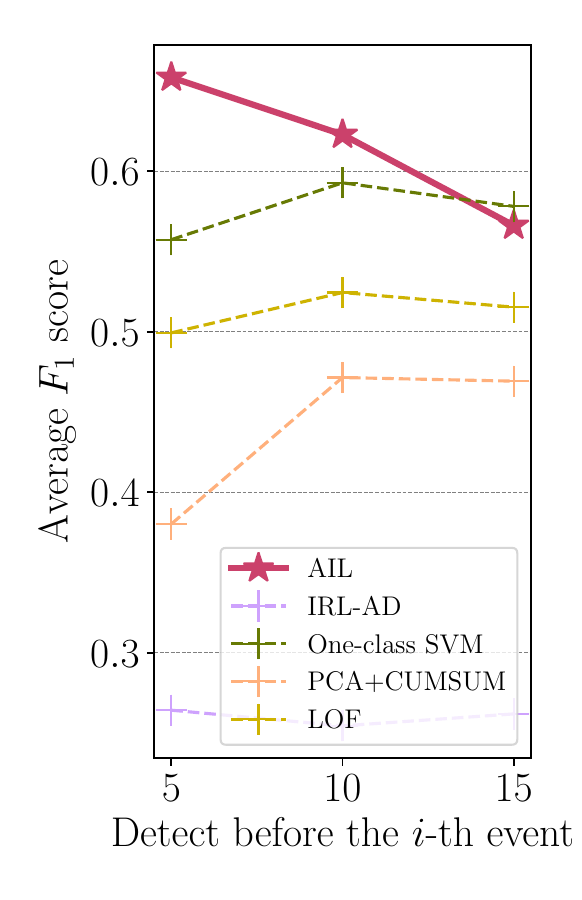}
\caption{composite synthetic}
\end{subfigure}
\begin{subfigure}[h]{0.24\linewidth}
\includegraphics[width=\linewidth]{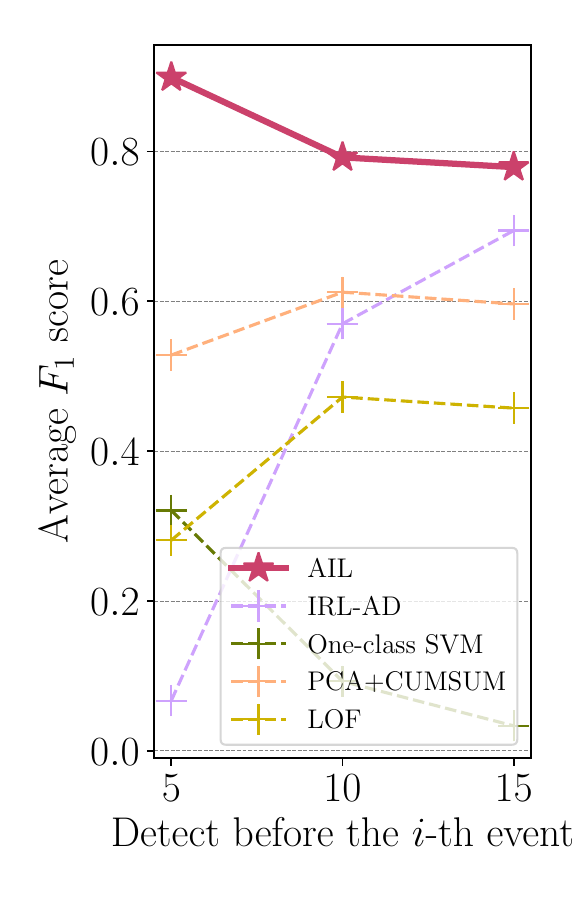}
\caption{credit card fraud}
\end{subfigure}
\begin{subfigure}[h]{0.24\linewidth}
\includegraphics[width=\linewidth]{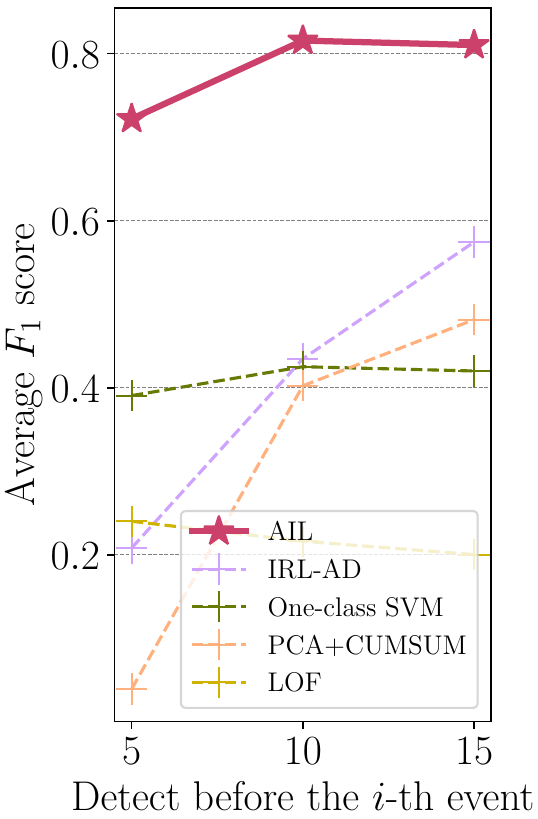}
\caption{robbery}
\end{subfigure}
\vspace{.1in}
\caption{The performance of our method (\texttt{AIL}) and four baselines on three data sets. The marks show the average $F_1$ score tested on testing sequences when decisions are made with observing part of the sequences.}
\label{fig:results-2}
\end{figure}

\subsection{Experiments set-up}
Consider two synthetic and \emph{two} real data sets:
(1) {\bf singleton synthetic data} consists of 1,000 anomalous sequences with an average length of 32. Each sequence is simulated by a Hawkes process with an exponential kernel specified in \eqref{eq:hawkes}, where $\beta=3$ and $\mu=10, \alpha=1$;
(2) {\bf composite synthetic data} consists of 1,000 mixed anomalous sequences with an average length of 29. Every 200 of the sequences are simulated by five Hawkes processes with different exponential kernels, where $\mu=10, \alpha=1$, and $\beta = 1,2,3,4,5$, respectively; and a real dataset
(3) {\bf real credit card fraud data} consists of 1,121 fraudulent credit transaction sequences with an average length of 21. Each anomalous transaction in a sequence includes the occurrence geolocation (latitude and longitude), time, and corresponding transaction amount in the dollar. 
(4) {\bf Robbery data} contains the 911-calls-for-service events in Atlanta from 2015 to 2017 (see, e.g., \cite{zhu2018crime, zhu2019crime, zhu2019spatial, zhu2021imitation}). We consider each crime series as a sequence of events: each event consists of the time (in seconds) and the geolocation (in latitude and longitude), indicating when and where the event occurred. 
We extract a series of events in the same category identified by the police detectives and treat them as one sequence. 
There exists intricate spatial and temporal dependency between these events with the same category. 
As indicated by \cite{zhu2019spatial}, the 911 calls of some crime incidents committed by the same individual share similar crime behaviours (e.g., forced entry) and tend to aggregate in time and space. 
This phenomenon is called modus operandi (M.O.) \cite{wang2015finding}. 
Within two years of data, this gives us 44 sequences with the sub-category of robbery. 
We test whether the algorithm can discriminate a series that is a robbery series or not. To create such an experiment, we also created 391 other types of crime series, which consist of randomly selected categories mixed together. We treat them as ``anomalous'' and ``normal'' data, respectively.
In the experiments, we under-sample the Fourier features, where $D = 20$, to improve training efficiency. In addition, 
we select $n' = n'' = 32$ empirically based on the computational resource of the experimental set-up on a standard laptop with a quad-core 4.7 GHz processor.
The model obtains its convergence around $M_0 = 1,000$ iterations with $M_1 = 5$. 

Our evaluation procedure is described as follows. 
We consider two sets of simulation data and two sets of real data, respectively. 
Each data set is divided into 80\% for training and 20\% for testing. 
To evaluate the performance of the fitted model, we first mix the testing set with 5,000 normal sequences, which are simulated by multiple Poisson processes, and then perform online detection. Note that we do not simulate normal sequences for the robbery data experiment, since we treat other types of crime as the alternative. 
The precision, recall, and $F_1$ score will be recorded accordingly. The method with higher precision, recall, and $F_1$ score at an earlier time step is more favorable than the others. 

\subsection{Results}

First, we summarize the performance of our method on three data sets in Figure~\ref{fig:results} and confirm that the proposed time-varying threshold can optimally separate the anomalies from normal sequences. 
To be specific, the fourth column in Figure~\ref{fig:results} shows the average log-likelihood (detection statistics) and its corresponding 1$\sigma$ region for both anomalous sequences and normal sequences. 
As we can see, the anomalous sequences attain a higher average log-likelihood than the normal sequences for all three data sets. Their log-likelihoods fall into different value ranges with rare overlap.  
Additionally, the time-varying threshold indicated by \emph{darker} dash lines lies between the value ranges of anomalous and normal sequences, which produces an amicable separation of these two types of sequences at any given time. The first three columns in Figure~\ref{fig:results} present more compelling evidence that the time-varying threshold is near-optimal. Colored cells of these heat-maps are calculated with different constant thresholds $\eta$ at each step $i$ by performing cross-validation. The brightest regions indicate the ``ground truth'' of the optimal choices of the threshold. 
As shown in the third column, the time-varying thresholds are very close to the optimal choices found by cross-validation. 

\begin{table}[]
\centering
\caption{$F_1$ score before $i$-th event using different adversarial generators in the proposed framework.}
\label{tab:generator-comparison}
\resizebox{.8\linewidth}{!}{%
\begin{tabular}{lccccccc}
\hline
\multirow{2}{*}{Generator in AIL} & \multicolumn{3}{c}{Singleton synthetic data}    &  & \multicolumn{3}{c}{Composite synthetic data}    \\
                                  & $i=5$ & $i=10$ & $i=15$ &  & $i=5$ & $i=10$ & $i=15$ \\ \hline
\emph{vanilla Hawkes process}            & .821          & .889           & .911           &  & .421          & .411           & .370           \\
\emph{vanilla LSTM}                      & .761          & .830           & .878           &  & .594          & .542           & .519            \\
\emph{proposed extended LSTM}            & .888          & .916           & .916           &  & .658          & .623           & .566           \\ \hline
\end{tabular}%
}
\end{table}

We also compare the step-wise $F_1$ scores of our method with the other four baselines in Figure~\ref{fig:results-2}. 
The results show that (1) from an overall standpoint, our method outperforms other baselines with significantly higher $F_1$ scores and (2) our method allows for easier and faster detection of anomalous sequences (before ten events being observed in our experiments), which is critically vital in sequential scenarios for most of the applications.

Finally, we present an ablation study to investigate the performance of our method using different generators. As shown in Table~\ref{tab:generator-comparison}, the proposed generator based on an extended LSTM structure significantly outperforms other generators in step-wise $F_1$ score. 
As a sanity check, the generator using the vanilla Hawkes process achieves competitive performances on the singleton synthetic data since the true anomalous sequences are from a Hawkes process; 
However, we can observe a dramatic performance deterioration on the composite synthetic data. The anomalous sequences are generated by multiple distributions and can hardly be captured by the vanilla Hawkes process. This result confirms that using a generic generative model cannot achieve the best performance. 

\section{Conclusion and Discussions}

We have presented a novel unsupervised anomaly detection framework on sequential data based on adversarial learning. 
A robust detector can be found by solving a minimax problem, and the optimal generator also helps define the time-varying threshold for making decisions in an online fashion. 
We model the sequential event data using a marked point process model with a neural Fourier kernel. 
Using both synthetic and real data, we demonstrated that our proposed approach outperforms other state-of-the-art. 
In particular, the experimental results suggest that the proposed framework has achieved excellent performance on a proprietary large-scale credit-card fraud dataset from a major department store in the U.S., which shows the potential of proposed methods to apply to real-world problems. 

Given the prevalence of sequential event data (in many applications, there is only one-class data), we believe our proposed method can be broadly applicable to many scenarios. Such applications include financial anomaly detection, internet intrusion detection, system anomaly detection such as power systems cascading failures, all of which are sequential discrete events data with complex temporal dependence. 
On the methodology side, we believe the proposed framework is a natural way to tackle the one-class anomaly detection problem, leveraging adversarial learning advances. It may provide a first step towards bridging imitation learning and sequential anomaly detection.

\bibliography{draft}
\bibliographystyle{plain}
 
 \newpage
 \appendix

\section{Deriving Conditional Intensity of MTPPs}
\label{append:conditional-intensity}
 


Assume that we have total number of $N([0, T] \times \mathcal{M})$ observations in $\boldsymbol{x}$. For any given $t \in [0, T]$, we assume that $n$ events happened before $t$ and denote the occurrence time of the latest event as $t_n$. Let $\Omega = [t, t + dt) \times B(m, dm)$ where $m \in \mathcal{M}$. Let $F(t) = \mathbb{P}(x_{n+1}, t_{n+1} < t | \mathcal{H}_{t_n} \cup x_n)$ be the conditional cumulative probability function, and $\mathcal{H}_{t_n} \cup x_n$ represents the history events happened up to time $t_n$ and at $t_n$. Let $f(t, m) \triangleq f(t, m|\mathcal{H}_{t_n} \cup x_n)$ be the corresponding conditional probability density function of new event happening in $\Omega$. As defined in \eqref{eq:def-conditional-intensity}, $\lambda(t, m)$ can be expressed as
\[
    \begin{aligned}
        \lambda(t, m) &= \mathbb{P}\{x_{n+1} \in \Omega | \mathcal{H}_{t}\} = \mathbb{P}\{x_{n+1} \in \Omega | \mathcal{H}_{t_n} \cup x_n \cup \{t_{n+1} \geq t\}\} \\&= \frac{\mathbb{P}\{x_{n+1} \in \Omega, t_{n+1} \geq t | \mathcal{H}_{t_{n}} \cup x_{n}\}}{\mathbb{P}\{t_{n+1} \geq t | \mathcal{H}_{t_{n}} \cup x_{n}\}} \\&= \frac{f(t, m)}{1 - F(t)}.
    \end{aligned} 
\]
We multiply the differential of time and space $dtdm$ on both side of the equation, and integral over $m$
\[
    dt \cdot \int_{\mathcal{M}}\lambda(t, u)du = \frac{dt \cdot \int_{\mathcal{M}}f(t, u)du}{1 - F(t)} = \frac{dF(t)}{1 - F(t)} = -d\log{(1 - F(t))}.
\]
Hence, integrating over $t$ on $(t_n, t)$ leads to $F(t) = 1 - \exp (-\int_{t_{n}}^{t}\int_{\mathcal{M}} \lambda(\tau, u)dud\tau)$ because $F(t_{n}) = 0$. Then we have
\begin{equation}
    f(t, m) = \lambda(t, m) \cdot \exp \left( -\int_{t_{n}}^{t}\int_{\mathcal{M}} \lambda(\tau, u)dud\tau \right)\ ,
    \nonumber
\end{equation}
The joint p.d.f. for a realization is then, by the chain rule, $f(x_1, \dots, x_{N([0, T]\times \mathcal{M})}) = \prod_{i=1}^{N([0, T]\times \mathcal{M})} f(t_i, m_i)$. Then the log-likelihood of an observed sequence $\boldsymbol{x}$ can be written as
\[
    l(\boldsymbol{x}) = \sum_{i=1}^{N([0, T]\times \mathcal{M})} \log \lambda(t_i, m_i) - \int_{0}^{T}\int_{\mathcal{M}} \lambda(\tau, u)dud\tau .
\]

\section{Deriving Log-Likelihood of MTPPs}
\label{append:Likelihood}
The likelihood function is defined as:
\begin{align*}
     \mathcal{L} 
     =f(t_1,\dots,t_n;m_1,\dots,m_n)
     =f(t_1,m_1|\mathcal{H}_{t_1})\times\dots\times f(t_n,m_n|\mathcal{H}_{t_n})
     =\prod_{i=1}^n f(t_i,m_i|\mathcal{H}_{t_i})
\end{align*}
Note from the definition of conditional intensity function, we get:
\begin{align*}
    \int_\mathcal{M} \lambda(t,m|\mathcal{H}_t) 
    = &~\frac{\int_\mathcal{M} f(t,m|\mathcal{H}_{t})}{1-F(t|\mathcal{H}_{t})} 
    = -\frac{\partial}{\partial t} \log (1-F(t|\mathcal{H}_{t_n})).
\end{align*}
Integrating both side from $t_n$ to $T$ (since $\lambda(t,m|\mathcal{H}_t)$ depends on history events, so its support is $[t_n,T)$) where $t_n$ is the last event before $T$. Integrate over all $\mathcal{M}$, we can get:
$
    \int_{t_n}^T \int_{m\in \mathcal{M}} \lambda(t,m|\mathcal{H}_t)dtdm -\log \left(1-F(t,m|\mathcal{H}_{t_n})\right)
$,
obviously, using basic calculus we could find:
\[
    f(t,m|\mathcal{H}_{t}) = \lambda(t,m|\mathcal{H}_{t}) \cdot \exp\bigg( - \int_{t_n}^T \int_{m\in \mathcal{M}} \lambda(t,m|\mathcal{H}_{t})dtdm \bigg).
\]
Plugging in the above formula into the definition of likelihood function, we have:
\[
    \mathcal{L} = \prod_{i=1}^n\bigg(\lambda(t_i,m_i|\mathcal{H}_{t})\bigg) \cdot \exp\bigg(-\int_{0}^T\int_{m\in \mathcal{M}}\lambda(t,m|\mathcal{H}_{t})dtdm\bigg),
\]
and the log-likelihood function of marked spatio-temporal point process can be the written as :
\[
    \ell = \sum_{i=1}^{n} \log \lambda(t_i, m_i|\mathcal{H}_{t}) \\
    - \int_0^T \int_{m \in \mathcal{M}} \lambda(t, m|\mathcal{H}_{t}) dt dm.
\]

\section{Proof for Proposition~\ref{prop:triggering-func}}
\label{append:proof-prop-1}

For the notational simplicity, we denote $W x$ as $x$.
First, since both $K$ and $p_\omega$ are real-valued, it suffices to consider only the real portion of $e^{ix}$ when invoking Theorem~\ref{thm1}. Thus, using $\text{Re}[e^{ix}] = \text{Re}[\cos(x) + i \sin(x)] = cos(x)$, we have
\[
    K(x, x^\prime) = \text{Re}[K(x, x^\prime)] = \int_\Omega p_\omega(\omega) \cos(\omega^\top (x - x^\prime)) d\omega.
\]
Next, we have
\begingroup
\allowdisplaybreaks
\begin{align*}
    \int_\Omega p_\omega(\omega) \cos\left(\omega^\top (x - x^\prime)\right) d\omega
    \overset{(i)}{=} &~ \int_\Omega p_\omega(\omega) \cos\left(\omega^\top (x - x^\prime)\right) d\omega \\
    + &~ \int_\Omega \int_0^{2\pi} \frac{1}{2\pi} p_\omega(\omega) \cos\left( \omega^\top (x+x^\prime) + 2 u \right) du d\omega \\
    = &~ \int_\Omega \int_0^{2\pi} \frac{1}{2\pi} p_\omega(\omega) \big[ \cos\left(\omega^\top (x - x^\prime)\right) + \cos\left( \omega^\top (x+x^\prime) + 2 u \right) \big] du d\omega\\
    = &~ \int_\Omega \int_0^{2\pi} \frac{1}{2\pi} p_\omega(\omega) \left[ 2 \cos(\omega^\top x + u) \cdot \cos(\omega^\top x^\prime + u) \right] du d\omega\\
    = &~ \int_\Omega p_\omega(\omega) \int_0^{2\pi} \frac{1}{2\pi} \bigg[ \sqrt{2} \cos(\omega^\top x + u) \cdot \sqrt{2} \cos(\omega^\top x^\prime + u) \bigg] du d\omega\\
    = &~ \mathbb{E} \left [ \phi_\omega(x) \cdot \phi_\omega(x^\prime) \right ]. 
\end{align*}
\endgroup
where $\phi_\omega(x) \coloneqq \sqrt{2} \cos(\omega^\top x + u)$, $\omega$ is sampled from $p_\omega$, and $u$ is uniformly sampled from $[0, 2\pi]$. The equation $(i)$ holds since the second term equals to 0 as shown below:
\begin{align*}
    \int_\Omega \int_0^{2\pi} p_\omega(\omega) \cos\left( \omega^\top (x+x^\prime) + 2 u \right) du d\omega
    =&~\int_\Omega p_\omega(\omega) \int_0^{2\pi} \cos\left( \omega^\top (x+x^\prime) + 2 u \right) du d\omega \\
    =&~\int_\Omega p_\omega(\omega) \cdot 0 \cdot d\omega
    = 0.
\end{align*}
Therefore, we can obtain the result in Proposition~\ref{prop:triggering-func}.

\section{Proof for Proposition~\ref{prop:triggering-func-convergence}}
\label{append:proof-prop-2}

Similar to the proof in Appendix~\ref{append:proof-prop-1}, we denote $W x$ as $x \in \mathcal{X}$ for the notational simplicity. Recall that we denote $R$ as the radius of the Euclidean ball containing $\mathcal{X}$ in Section~\ref{sec:pp-deep-fourier-kernel}. 
In the following, we first present two useful lemmas.

\begin{lemma}
\label{lemma:prop-2-lemma-1}
Assume $\mathcal{X} \subset \mathbb{R}^d$ is compact. Let $R$ denote the radius of the Euclidean ball containing $\mathcal{X}$, then for the kernel-induced feature mapping $\Phi$ defined in \eqref{eq:triggering-func-estimation}, the following holds for any $0 < r \le 2R$ and $\epsilon > 0$:
\[
    \mathbb{P}\left\{\underset{x, x^\prime \in \mathcal{X}}{\sup} \left | \Phi(x)^\top \Phi(x^\prime) - K(x, x^\prime) \right | \ge \epsilon \right\} \le ~2 \mathcal{N}(2R, r) \exp\left\{ - \frac{D \epsilon^2}{8}\right\} + \frac{4r\sigma_p}{\epsilon}.
\]
where $\sigma_p^2 = \mathbb{E}_{\omega \sim p_\omega} [\omega^\top \omega] < \infty$ is the second moment of the Fourier features, and $\mathcal{N}(R, r)$ denotes the minimal number of balls of radius $r$ needed to cover a ball of radius $R$.
\end{lemma}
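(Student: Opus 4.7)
The plan is to mimic the classical $\epsilon$-net argument of Rahimi and Recht for random Fourier features. Define the error function $s(x,x') \coloneqq \Phi(x)^\top\Phi(x') - K(x,x')$; by shift-invariance (and after the implicit averaging of the phase $u$ appearing in Proposition~\ref{prop:triggering-func}), the relevant domain reduces to the difference set $\mathcal{X}_\Delta = \{x-x' : x,x'\in\mathcal{X}\}$, which lies inside a Euclidean ball of radius $2R$. The supremum of $|s|$ will be controlled by combining a pointwise concentration inequality on the anchors of an $r$-cover with a Lipschitz estimate on $s$ that lets us transfer the pointwise bound to a uniform one.

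For the pointwise step, at any fixed anchor $\Delta_i$, the quantity $\Phi(x)^\top\Phi(x')$ is an average of $D$ i.i.d.\ random variables $\phi_{\omega_k}(x)\phi_{\omega_k}(x')$, each uniformly bounded in $[-2,2]$ since $|\phi_\omega|\le\sqrt{2}$, with expectation $K(\Delta_i)$ by Proposition~\ref{prop:triggering-func}. Hoeffding's inequality immediately yields $\mathbb{P}(|s(\Delta_i)|\ge\epsilon/2) \le 2\exp\{-D\epsilon^2/8\}$, and a union bound over the $\mathcal{N}(2R,r)$ anchor points produces the first term of the stated bound.

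The Lipschitz step is the delicate part. Both $\Phi(x)^\top\Phi(x')$ and $K(\cdot)$ are differentiable, with gradients built from terms of the form $\omega\sin(\omega^\top W \Delta + \cdots)$. Using the finite second moment $\sigma_p^2 = \mathbb{E}_{\omega}[\omega^\top\omega]<\infty$ of the spectral distribution and Jensen's inequality, one can estimate $\mathbb{E}[L_s] \le c\,\sigma_p$ for a small absolute constant $c$, where $L_s$ denotes any almost-sure Lipschitz constant of $s$ on $\mathcal{X}_\Delta$. Markov's inequality then gives $\mathbb{P}(L_s \ge \epsilon/(2r)) \le 4r\sigma_p/\epsilon$, which is the second term of the bound.

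To close the argument, observe that on the complement of the two failure events above, for any $\Delta \in \mathcal{X}_\Delta$ one picks the nearest anchor $\Delta_i$ (at distance at most $r$) and writes $|s(\Delta)| \le |s(\Delta_i)| + L_s \cdot r < \epsilon/2 + \epsilon/2 = \epsilon$; a final union bound yields the lemma. The main obstacle will be the Lipschitz estimate: one must verify that the $W$-transformation inside $\phi_\omega$ does not inflate the expected gradient beyond $O(\sigma_p)$, and that Markov is applied to $L_s$ (rather than $L_s^2$, which would produce a weaker quadratic-in-$r$ tail) so as to recover the linear-in-$r$ failure probability $4r\sigma_p/\epsilon$ stated in the lemma. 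The remaining choice of $r$ (done in the next step when proving Proposition~\ref{prop:triggering-func-convergence}) then balances these two terms to optimize the bound.
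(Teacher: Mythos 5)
Your proposal follows essentially the same route as the paper's proof, which is the classical Rahimi--Recht $\epsilon$-net argument: reduce to the difference set $\Delta=\{x-x'\}$ inside a ball of radius $2R$, apply Hoeffding plus a union bound at the $\mathcal{N}(2R,r)$ anchors to get the term $2\mathcal{N}(2R,r)\exp\{-D\epsilon^2/8\}$, control the Lipschitz constant $L=\sup_{\delta}\|\nabla S(\delta)\|$ of $S(\delta)=\Phi(x)^\top\Phi(x')-K(x,x')$ via the second moment $\sigma_p^2$, and combine on the complement of the two failure events. The one place you diverge is the tail bound on $L$: you apply Markov to $L$ itself (using $\mathbb{E}[L]\le\sqrt{\mathbb{E}[L^2]}\le 2\sigma_p$), which yields exactly the linear term $4r\sigma_p/\epsilon$ in the lemma statement, whereas the paper bounds $\mathbb{E}[L^2]\le 4\sigma_p^2$ and applies Markov to $L^2$, obtaining $(4r\sigma_p/\epsilon)^2$ --- the squared form is what its proof actually concludes with and what it uses when optimizing $r$ in Proposition~2, so the lemma as printed and the paper's own derivation are inconsistent and your version matches the statement more faithfully. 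One correction to your commentary: the quadratic tail $(4r\sigma_p/\epsilon)^2$ is not weaker but \emph{stronger} (smaller) in the relevant regime $4r\sigma_p/\epsilon\le 1$ (and when that ratio exceeds $1$ the linear bound is vacuous anyway), so the two are interchangeable for proving the lemma; your instinct to worry about the $W$-transformation is well placed, and the paper sidesteps it only by silently absorbing $W$ into $x$, so a fully rigorous version should either do the same or let $\sigma_p$ be the second moment of $W^\top\omega$.
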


\begin{proof}{Proof of Lemma \ref{lemma:prop-2-lemma-1}}
Now, define $\Delta = \{\delta: \delta = x - x^\prime,~, x,x^\prime \in \mathcal{X} \}$ and note that $\Delta$ is contained in a ball of radius at most $2R$. $\Delta$ is a closed set since $\mathcal{X}$ is closed and thus $\Delta$ is a compact set. 
Define $B = \mathcal{N}(2R, r)$ the number of balls of radius $r$ needed to cover $\Delta$ and let $\delta_j$, for $j \in [B]$ denote the center of the covering balls. 
Thus, for any $\delta \in \Delta$ there exists a $j$ such that $\delta = \delta_j + r^\prime$ where $|r^\prime| < r$.

Next, we define $S(\delta) = \Phi(x)^\top \Phi(x^\top) - K(x, x^\prime)$, where $\delta = x - x^\prime$. Since $S$ is continuously differentiable over the compact set $\Delta$, it is $L$-Lipschitz with $L = \sup_{\delta \in \Delta} || \nabla S(\delta) ||$. Note that if we assume $L < \epsilon/2r$ and for all $j \in [B]$ we have $|S(\delta_j)| < \epsilon/2$, then the following inequality holds for all $\delta = \delta_j + r^\prime \in \Delta$:
\begin{equation}
    |S(\delta)| = |S(\delta_j + r^\prime)| \le L |\delta_j - (\delta_j + r^\prime)| + |S(\delta_j)| \le rL + \frac{\epsilon}{2} < \epsilon.
    \label{eq:prop-2-part-1}
\end{equation}
The remainder of this proof bounds the probability of the events $L > \epsilon / (2r)$ and $|S(\delta_j)|\ge \epsilon/2$. Note that all following probabilities and expectations are with respect to the random variables $\omega_1, \dots, \omega_D$.

To bound the probability of the first event, we use Proposition~\ref{prop:triggering-func} and the linearity of expectation, which implies the key fact $\mathbb{E}[\nabla(\Phi(x)^\top \Phi(x^\prime))] = \nabla K(x, x^\top)$. We proceed with the following series of inequalities:
\begingroup
\allowdisplaybreaks
\begin{align*}
    \mathbb{E}\left[L^2\right] 
    = &~\mathbb{E}\left[\underset{\delta\in\Delta}{\sup}||\nabla S(\delta)||^2\right]\\
    = &~\mathbb{E}\bigg[\underset{x, x^\prime \in \mathcal{X}}{\sup}||\nabla(\Phi(x)^\top \Phi(x^\prime)) - \nabla K(x, x^\prime) ||^2\bigg]\\
    \overset{(i)}{\le}&~2 \mathbb{E}\left[\underset{x, x^\prime \in \mathcal{X}}{\sup}||\nabla(\Phi(x)^\top \Phi(x^\prime))||^2\right] + 2 \underset{x, x^\prime \in \mathcal{X}}{\sup}||\nabla K(x, x^\prime) ||^2\\
    = &~2 \mathbb{E}\left[\underset{x, x^\prime \in \mathcal{X}}{\sup}||\nabla(\Phi(x)^\top \Phi(x^\prime))||^2\right] + 2 \underset{x, x^\prime \in \mathcal{X}}{\sup}||\mathbb{E}\left[ \nabla(\Phi(x)^\top \Phi(x^\prime))\right] ||^2\\
    \overset{(ii)}{\le} &~4 \mathbb{E}\left[\underset{x, x^\prime \in \mathcal{X}}{\sup}||\nabla(\Phi(x)^\top \Phi(x^\prime))||^2\right],
\end{align*}
\endgroup
where the first inequality $(i)$ holds due to the the inequality $||a + b||^2 \le 2 ||a||^2 + 2 ||b||^2$ (which follows from Jensen’s inequality) and the subadditivity of the supremum function. The second inequality $(ii)$ also holds by Jensen’s inequality (applied twice) and again the subadditivity of supremum function. Furthermore, using a sum-difference trigonometric identity and computing the gradient with respect to $\delta = x - x^\prime$, yield the following for any $x, x^\prime \in \mathcal{X}$:
\[
    \nabla(\Phi(x)^\top \Phi(x^\prime)) = \nabla \left(\frac{1}{D} \sum_{k=1}^D \cos(\omega_k^\top (x-x^\prime))\right) = \frac{1}{D} \sum_{k=1}^D \omega_k \sin(\omega_k^\top (x-x^\prime)).
\]
Combining the two previous results gives
\begin{align*}
    \mathbb{E}[L^2] 
    & \le 4 \mathbb{E}\left[ \underset{x, x^\prime \in \mathcal{X}}{\sup} ||\frac{1}{D} \sum_{k=1}^D \omega_k \sin(\omega_k^\top (x-x^\prime)) ||^2 \right]\\
    & \le 4 \underset{\omega_1, \dots, \omega_D}{\mathbb{E}} \left[ \left( \frac{1}{D}\sum_{k=1}^{D} ||\omega_k|| \right)^2 \right]\\
    & \le 4 \underset{\omega_1, \dots, \omega_D}{\mathbb{E}} \left[ \frac{1}{D}\sum_{k=1}^{D} ||\omega_k||^2 \right] = 4 \underset{\omega}{\mathbb{E}}[||\omega||^2] = 4 \sigma_p^2,
\end{align*}
which follows from the triangle inequality, $|\sin(\cdot)|\le 1$, Jensen’s inequality and the fact that the $\omega_k$s are drawn i.i.d. derive the final expression. Thus, we can bound the probability of the first event via Markov’s inequality:
\begin{equation}
    \mathbb{P}\left[ L \ge \frac{\epsilon}{2r} \right] \le \left( \frac{4r\sigma_p}{\epsilon} \right)^2.
    \label{eq:prop-2-part-2}
\end{equation}
To bound the probability of the second event, note that, by definition, $S(\delta)$ is a sum of $D$ i.i.d. variables, each bounded in absolute value by $\frac{2}{D}$ (since, for all $x$ and $x^\prime$, we have $|K(x, x^\prime)| \le 1$ and $|\Phi(x)^\top \Phi(x^\prime)| \le 1$), and $\mathbb{E}[S(\delta)] = 0$. Thus, by Hoeffding’s inequality and the union bound, we can write
\begin{equation}
    \mathbb{P}\left[\exists j \in [B]: |S(\delta_j)|\ge \frac{\epsilon}{2}\right] \le ~\sum_{j=1}^{B} \mathbb{P}\left[|S(\delta_j)|\ge \frac{\epsilon}{2}\right] \le ~2 B \exp\left( -\frac{D\epsilon^2}{8} \right).
    \label{eq:prop-2-part-3}
\end{equation}
Finally, combining \eqref{eq:prop-2-part-1}, \eqref{eq:prop-2-part-2}, \eqref{eq:prop-2-part-3}, and the definition of $B$ we have the result in Proposition~\ref{prop:triggering-func-convergence}, i.e.,
\[
    \mathbb{P}\left[\underset{\delta \in \Delta}{\sup}|S(\delta_j)|\ge \epsilon\right] \le 2 \mathcal{N}(2 R, r) \exp\left\{ - \frac{D \epsilon^2}{8} \right\} + \left(\frac{4r\sigma_p}{\epsilon}\right)^2,
\]
\end{proof}

As we can see now, a key factor in the bound of the proposition is the covering number $N(2R,r)$, which strongly depends on the dimension of the space $d$. In the following proof, we make this dependency explicit for one especially simple case, although similar arguments hold for more general scenarios as well.

\begin{lemma}
\label{lemma:prop-2-lemma-2}
Let $\mathcal{X} \subset \mathbb{R}^d$ be a compact and let $R$ denote the radius of the smallest enclosing ball. Then, the following inequality holds:
\[
    \mathcal{N}(R, r) \le \left( \frac{3R}{r} \right)^d.
\]
\end{lemma}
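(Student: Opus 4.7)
The plan is to use the classical maximal-packing volume argument, which yields a clean bound on covering numbers via a direct volume comparison in $\mathbb{R}^d$. Concretely, let $B(0,R) \subset \mathbb{R}^d$ be the smallest enclosing Euclidean ball of $\mathcal{X}$, and consider a maximal $r$-packing $\{c_1, \dots, c_N\} \subset B(0, R)$, i.e., a collection of points that are pairwise at distance at least $r$ and to which no further point of $B(0, R)$ can be added while preserving pairwise separation. The first key observation is that any such maximal packing is automatically an $r$-cover of $B(0, R)$: if some $x \in B(0, R)$ were at distance greater than $r$ from every $c_i$, it could be appended to the packing, contradicting maximality. Hence $\mathcal{N}(R, r) \le N$, and it suffices to upper bound $N$.

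The second step is the standard volume comparison. By the defining property of the packing, the open balls $B(c_i, r/2)$ are pairwise disjoint; moreover, since each $c_i$ lies in $B(0,R)$, each such small ball is contained in the enlarged ball $B(0, R + r/2)$. Writing $V_d(\rho) = c_d \rho^d$ for the $d$-dimensional Euclidean ball of radius $\rho$ (where $c_d$ is a dimension-dependent constant that will cancel), disjointness plus containment gives
\begin{equation*}
    N \cdot c_d (r/2)^d \;=\; \sum_{i=1}^N V_d(r/2) \;\le\; V_d(R + r/2) \;=\; c_d (R + r/2)^d.
\end{equation*}
Dividing by $c_d (r/2)^d$ yields $N \le (1 + 2R/r)^d$.

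The final step is to show $(1 + 2R/r)^d \le (3R/r)^d$, which holds precisely when $r \le R$. In the regime $r \le R$ this is immediate: $1 + 2R/r \le R/r + 2R/r = 3R/r$. For the edge case $r > R$ one takes $N = 1$ (the ball $B(0,R)$ itself is already of radius at most $r$, so a single covering ball centered at the origin suffices), and the bound $1 \le (3R/r)^d$ holds on the relevant range $r \le 3R$. Chaining the three steps gives $\mathcal{N}(R, r) \le (3R/r)^d$, as claimed.

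The argument contains no real obstacle: the volume comparison is elementary once packing/covering duality is invoked, and the application to Proposition \ref{prop:triggering-func-convergence} only invokes the lemma with radius $2R$ and a covering radius $r \le 2R$ (so that $3 \cdot 2R/r \ge 3$), which falls squarely within the regime where the clean inequality holds. The only minor care point is recording that the packing balls have radius $r/2$ (not $r$) so that disjointness actually holds — using $r$ would force a weaker $(1 + R/r)^d$-style statement that would require adjusting the final constant.
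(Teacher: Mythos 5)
Your proof is correct and takes essentially the same route as the paper: both are the classical maximal-packing volume-comparison argument for covering numbers. The only cosmetic difference is that you pack disjoint $r/2$-balls into the enlarged ball $B(0,R+r/2)$, obtain $N \le (1+2R/r)^d$, and then pass to $(3R/r)^d$ in the regime $r \le R$, whereas the paper packs $r/3$-balls entirely inside $B(0,R)$ and reads off $(3R/r)^d$ directly from $V(R)/V(r/3)$; your explicit treatment of the $r>R$ edge case and the remark on why the regime $r \le 2R$ used in Lemma~\ref{lemma:prop-2-lemma-1} is covered are, if anything, slightly more careful than the paper's write-up.
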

\begin{proof}{Proof of Lemma \ref{lemma:prop-2-lemma-2}}
By using the volume of balls in $\mathbb R^d$, we already see that $R^d / (r/3)^d = (3R/r)^d$ is a trivial upper bound on the number of balls of radius $r/3$ that can be packed into a ball of radius $R$ without intersecting. Now, consider a maximal packing of at most $(3R/r)^d$ balls of radius $r/3$ into the ball of radius $R$. Every point in the ball of radius $R$ is at distance at most $r$ from the center of at least one of the packing balls. If this were not true, we would be able to fit another ball into the packing, thereby contradicting the assumption that it is a maximal packing. Thus, if we grow the radius of the at most $(3R/r)^d$ balls to $r$, they will then provide a (not necessarily minimal) cover of the ball of radius $R$.
\end{proof}

Finally, by combining the two previous lemmas, we can present an explicit finite sample approximation bound. We use lemma~\ref{lemma:prop-2-lemma-1} in conjunction with lemma~\ref{lemma:prop-2-lemma-2} with the following choice of $r$:
\[
    r = \left[ \frac{2(6R)^d \exp(- \frac{D\epsilon^2}{8})}{\left( \frac{4\sigma_p}{\epsilon} \right)^2} \right]^{\frac{2}{d+2}},
\]
which results in the following expression
\[
    \mathbb{P}\left[ \underset{\delta \in \Delta}{\sup} |S(\delta)| \ge \epsilon \right] \le 4 \left( \frac{24R\sigma_p}{\epsilon} \right)^{\frac{2d}{d+2}} \exp\left( -\frac{D\epsilon^2}{4(d+2)} \right).
\]
Since $32R \sigma_p/\epsilon \ge 1$, the exponent $2d / (d+2)$ can be replaced by 2, which completes the proof.

\section{Proof for Proposition~\ref{prop:loglik-integral}}
\label{append:proof-prop-3}

To calculate the integral (the second term of the log-likelihood function defined in \eqref{eq:pp-loglik}), 
we first need consider the time and the mark of events separately, i.e., $x = [t, m]^\top$. 
Denote $\boldsymbol{i}$ as the imaginary unit. 
Hence the integral can be written as
\begingroup
\allowdisplaybreaks
\begin{align}
    &~\int_0^T \int_\mathcal{M} \left(\mu + \sum_{t_j < t} \widetilde K([t, m]^\top, [t_j, m_j]^\top) \right) dm dt \nonumber\\
    = &~\int_0^T \int_\mathcal{M} \left(\mu + 
    \sum_{t_j < t} \frac{1}{D} \sum_{k=1}^D 
    e^{\boldsymbol{i}\omega_k^\top W ([t, m]^\top - [t_j, m_j]^\top)} 
    \right) dm dt \nonumber\\
    = &~\mu T |\mathcal{M}|+ 
    \frac{1}{D} \sum_{k=1}^D \int_0^T \int_\mathcal{M} \sum_{t_j < t} 
    e^{\boldsymbol{i}\omega_k^\top W ([t, m]^\top - [t_j, m_j]^\top)}
    dm dt \nonumber\\
    = &~\mu T |\mathcal{M}| + \frac{1}{D} \sum_{k=1}^D \sum_{i=0}^{N_T} \int^{t_{i+1}}_{t_{i}} \int_\mathcal{M} \sum_{t_j < t}e^{\boldsymbol{i}\omega_k^\top W ([t, m]^\top - [t_j, m_j]^\top)}
    dm dt \nonumber\\
    = &~\mu T |\mathcal{M}| + \frac{1}{D} \sum_{k=1}^D \sum_{i=0}^{N_T} \sum_{t_j < t_i} \int^{t_{i+1}}_{t_{i}} \int_\mathcal{M} e^{\boldsymbol{i}\omega_k^\top W ([t, m]^\top - [t_j, m_j]^\top)}
    dm dt\nonumber\\
    \label{eq:loglik-integral-1}
\end{align}
\endgroup
Then the remainder of the proof calculates the integral $\int^{t_{i+1}}_{t_{i}} \int_\mathcal{M} e^{\boldsymbol{i}\omega_k^\top W ([t, m]^\top - [t_j, m_j]^\top)} dm dt$.
%
First, let the linear mapping matrix $W = [w_0|w_1|\dots|w_d]$ be split into $d+1$ column vectors, where $w_0 \in \mathbb{R}^{r \times 1}$, $w_{\ell} \in \mathbb{R}^{r \times 1}, \ell = 1,\dots,d$ correspond to the linear mappings for the time and mark subspace, respectively.
Denote the matrix formed by first $\ell$ column vectors of matrix $W$ as $W_{1:\ell} \coloneqq [w_1|\dots|w_\ell]$.
Denote the $\ell$-th dimension of $\mathcal{M}$ as $\mathcal{M}_\ell \in \mathbb{R}, \ell = 1,\dots,d$.
Assume each dimension of the mark space $\mathcal{M}_\ell, \ell = 1,\dots,d$ is normalized to range $[a, b]$. 
Denote the sub-space of $\mathcal{M}$ with the first $\ell$ dimensions as $\mathcal{M}_{1:\ell} \coloneqq \mathcal{M}_1 \times, \dots, \times \mathcal{M}_\ell, \ell = 1,\dots,d$. 
Denote the mark vector with first $\ell$ elements as $m_{1:\ell} = [m_1, \dots, m_\ell]$.
Then the integral can be written as
\begingroup
\allowdisplaybreaks
\begin{align}
    \int^{t_{i+1}}_{t_{i}} \int_\mathcal{M} 
    e^{\boldsymbol{i}\omega_k^\top W ([t, m]^\top - [t_j, m_j]^\top)} dm dt\nonumber
    = &~e^{- \boldsymbol{i}\omega_k^\top W [t_j, m_j]^\top} \int^{t_{i+1}}_{t_{i}} \int_\mathcal{M} 
    e^{\boldsymbol{i}\omega_k^\top W [t, m]^\top} dm dt\nonumber\\
    = &~e^{- \boldsymbol{i}\omega_k^\top W [t_j, m_j]^\top} 
    \int^{t_{i+1}}_{t_{i}} e^{\boldsymbol{i}\omega_k^\top w_0 t} dt
    \int_\mathcal{M} e^{\boldsymbol{i}\omega_k^\top W_{1:d} m^\top} dm.
    \label{eq:loglik-integral-2}
\end{align}
\endgroup
To avoid notational overload, let $f_{k,\ell}(m)$ denote $e^{\boldsymbol{i} \omega_k^\top w_\ell m_\ell}$, and $F_{k,\ell}(m)$ denote $e^{\boldsymbol{i}\omega_k^\top W_{1:\ell} m_{1:\ell}^\top}$. 
Note that 
\[
    \int_a^b f_{k,\ell}(m) d m_\ell = \frac{e^{\boldsymbol{i}\omega_k^\top w_\ell b} - e^{\boldsymbol{i}\omega_k^\top w_\ell a}}{\boldsymbol{i}\omega_k^\top w_\ell},
\]
and $F_{k,\ell}(m) = f_{k,\ell}(m) F_{k,\ell-1}(m)$.
Then $\int_\mathcal{M} e^{\boldsymbol{i}\omega_k^\top W_{1:d} m^\top} dm$ can be written as 
\begin{align}
    \int_\mathcal{M} e^{\boldsymbol{i}\omega_k^\top W_{1:d} m^\top} dm 
    = &~\int_\mathcal{M} F_{k,d}(m) dm
    = \int_{\mathcal{M}_{1:d-1}} F_{k,d-1}(m) d m_{1:d-1} \int_a^b f_{k,d}(m) d m_d\nonumber\\
    = &~\prod_{\ell=1}^d \left( \int_a^b f_{k,\ell}(m) d m_\ell \right)
    = \prod_{\ell=1}^d \left( \frac{e^{\boldsymbol{i}\omega_k^\top w_\ell b} - e^{\boldsymbol{i}\omega_k^\top w_\ell a}}{\boldsymbol{i}\omega_k^\top w_\ell} \right).
    \label{eq:loglik-integral-3}
\end{align}
Substitute \eqref{eq:loglik-integral-3} into \eqref{eq:loglik-integral-2}, we have
\begingroup
\allowdisplaybreaks
\begin{align}
    &~e^{- \boldsymbol{i}\omega_k^\top W [t_j, m_j]^\top} 
    \int^{t_{i+1}}_{t_{i}} e^{\boldsymbol{i}\omega_k^\top w_0 t} dt
    \int_\mathcal{M} e^{\boldsymbol{i}\omega_k^\top W_{1:d} m^\top} dm\nonumber\\
    = &~e^{- \boldsymbol{i}\omega_k^\top W [t_j, m_j]^\top} 
    \left(\frac{e^{\boldsymbol{i}\omega_k^\top w_0 t_{i+1}} - e^{\boldsymbol{i}\omega_k^\top w_0 t_i}}{\boldsymbol{i}\omega_k^\top w_0}\right) \prod_{\ell=1}^d \left(\frac{e^{\boldsymbol{i}\omega_k^\top w_\ell b} - e^{\boldsymbol{i}\omega_k^\top w_\ell a}}{\boldsymbol{i}\omega_k^\top w_\ell}\right). \nonumber
\end{align}
\endgroup
Due to the fact that, for any $b > a$ where $a, b \in \mathbb{R}$,
\begingroup
\allowdisplaybreaks
\begin{align*}
    \frac{e^{\boldsymbol{i}\omega_k^\top w_\ell b} - e^{\boldsymbol{i}\omega_k^\top w_\ell a}}{\boldsymbol{i}\omega_k^\top w_\ell}
    = &~\frac{e^{\omega_k^\top w_\ell}}{\omega_k^\top w_\ell} \cdot 
    \frac{e^{\boldsymbol{i} b} - e^{\boldsymbol{i} a}}{\boldsymbol{i}} \\
    = &~\frac{e^{\omega_k^\top w_\ell}}{\omega_k^\top w_\ell} \cdot 
    e^{\boldsymbol{i} \frac{b + a}{2}} \cdot
    \frac{e^{\boldsymbol{i} \frac{b - a}{2}} - e^{\boldsymbol{i} \frac{a - b}{2}}}{\boldsymbol{i}}\\
    \overset{(i)}{=} &~\frac{2 e^{\omega_k^\top w_\ell}}{\omega_k^\top w_\ell}  
    \left( \cos\left(\frac{b + a}{2}\right) + \boldsymbol{i} \sin\left(\frac{b + a}{2}\right) \right)
    \sin\left(\frac{b-a}{2}\right)\\
    \overset{(ii)}{=} &~\frac{2 e^{\omega_k^\top w_\ell}}{\omega_k^\top w_\ell}  
    \cos\left(\frac{b + a}{2}\right) \sin\left(\frac{b - a}{2}\right).
\end{align*}
\endgroup
The equality $(i)$ holds because of Euler's formula. The equality $(ii)$ holds, since both $K$ and $p_\omega$ are real-valued, it suffices to consider only the real portion. Let $x_j$ denote $[t_j, m_j]^\top$ and substitute $\exp\{- \boldsymbol{i}\omega_k^\top W [t_j, m_j]^\top\}$ with $\cos(- \omega_k^\top W x_j)$. Thus, the integral~\eqref{eq:loglik-integral-2} can be written as
\begin{equation}
    \cos\left(-\omega_k^\top W x_j \right)
    \cos\left(\frac{t_{i+1} + t_{i}}{2}\right) \sin\left(\frac{t_{i+1} - t_{i}}{2}\right) \cos^d\left(\frac{b + a}{2}\right) \sin^d\left(\frac{b-a}{2}\right)
    \prod_{\ell=1}^{d+1} \frac{2 e^{\omega_k^\top w_\ell}}{\omega_k^\top w_\ell}.
    %
    \label{eq:loglik-integral-4}
\end{equation}
Finally, combining previous results in~\eqref{eq:loglik-integral-1}, \eqref{eq:loglik-integral-2}, and \eqref{eq:loglik-integral-4} gives the result in Proposition~\ref{prop:loglik-integral}, i.e.,
\begin{align*}
    &~\int_{\mathcal{X}} \lambda(x|\mathcal{H}_t;\theta_0) dx 
    = \mu T (b-a)^d + \frac{1}{D} \sum_{k=1}^D \sum_{i=0}^{N_T} \sum_{t_j < t_i} \\
    &~\cos\left(-\omega_k^\top W x_j \right)
    \cos\left(\frac{t_{i+1} + t_{i}}{2}\right) \sin\left(\frac{t_{i+1} - t_{i}}{2}\right) \cos^d\left(\frac{b + a}{2}\right) \sin^d\left(\frac{b-a}{2}\right)
    \prod_{\ell=1}^{d+1} \frac{2 e^{\omega_k^\top w_\ell}}{\omega_k^\top w_\ell}.
\end{align*}

\end{document}